  \providecommand\BibTeX{{%
    \normalfont B\kern-0.5em{\scshape i\kern-0.25em b}\kern-0.8em\TeX}}}
\newcommand{\nosection}[1]{\vspace{1.5pt}\noindent\textbf{#1.}}
\newcommand{\cdrmodel}{HeteroCDR}
\newcommand{\modelname}{PriCDR}
\newcommand{\GN}{\mathcal{N}}
\newcommand{\M}{\mathcal{M}}
\newcommand{\x}{\mathbf{x}}
\newcommand{\E}{\mathbb{E}}
\newcommand{\pdf}{\rm{PDF}}
\begin{document}
\title{Differential Private Knowledge Transfer for Privacy-Preserving Cross-Domain Recommendation}\thanks{$^*$Corresponding author.}
%%
%% The "author" command and its associated commands are used to define the authors and their affiliations.
% \fancyhead{}

\author{Chaochao Chen$^{1}$, Huiwen Wu$^{2,*}$, Jiajie Su$^{1}$, Lingjuan Lyu$^{3}$, Xiaolin Zheng$^{1,4}$ and Li Wang$^{2}$}
\affiliation{
  $^{1}$Zhejiang University, $^{2}$Ant Group, $^{3}$Sony AI, $^{4}$JZTData Technology\\
% zjuccc@zju.edu.cn, huiwen.whw@antgroup.com, sujiajie@zju.edu.cn, lingjuanlvsmile@gmail.com, xlzheng@zju.edu.cn, raymond.wangl@antgroup.com
\{zjuccc, sujiajie, xlzheng\}@zju.edu.cn, \{huiwen.whw, raymond.wangl\}@antgroup.com, Lingjuan.Lv@sony.com
}
% \affiliation{%
%  \institution{~}
%  \streetaddress{P.O. Box 1212}
%  \city{~}
%  \state{~}
%  \postcode{310000}
% }
% \email{}

% \author{Chaochao Chen$^{1}$, Jun Zhou$^{1}$, Li Wang$^{1}$, Xibin Wu$^{1}$, Wenjing Fang$^{1}$, Jin Tan$^{1}$, Lei Wang$^{1}$, Alex X. Liu$^{1}$, Hao Wang$^{2}$, Cheng Hong$^{3}$}
% \affiliation{
%   $^{1}$Ant Group, $^{2}$Shandong Normal University, $^{3}$Alibaba Group\\
% \{chaochao.ccc, jun.zhoujun, raymond.wangl, xibin.wxb, bean.fwj, tanjin.tj, shensi.wl, alexliu\}@antgroup.com \\
% wanghao@sdnu.edu.cn, vince.hc@alibaba-inc.com
% }

%\author{Anonymous Author(s)}
%\affiliation{%
%  \institution{~}
%  \streetaddress{P.O. Box 1212}
%  \city{~}
%  \state{~}
%  \postcode{310000}
%}
%\email{}

\renewcommand{\shortauthors}{Chaochao Chen et al.}

\begin{CCSXML}
<ccs2012>
<concept>
<concept_id>10002978.10003029.10011150</concept_id>
<concept_desc>Security and privacy~Privacy protections</concept_desc>
<concept_significance>500</concept_significance>
</concept>
% <concept>
% <concept_id>10002978.10003029.10011703</concept_id>
% <concept_desc>Security and privacy~Usability in security and privacy</concept_desc>
% <concept_significance>500</concept_significance>
% </concept>
<concept>
<concept_id>10010147.10010257</concept_id>
<concept_desc>Computing methodologies~Machine learning</concept_desc>
<concept_significance>500</concept_significance>
</concept>
</ccs2012>
\end{CCSXML}

\ccsdesc[500]{Security and privacy~Privacy protections}
% \ccsdesc[500]{Security and privacy~Usability in security and privacy}
\ccsdesc[500]{Computing methodologies~Machine learning}

\begin{abstract}
Cross Domain Recommendation (CDR) has been popularly studied to alleviate the cold-start and data sparsity problem commonly existed in recommender systems. 
CDR models can improve the recommendation performance of a target domain by leveraging the data of other source domains. 
However, most existing CDR models assume information can directly `transfer across the bridge', ignoring the privacy issues. 
To solve this problem, we propose a novel two stage based privacy-preserving CDR framework (\modelname). 
In the first stage, we propose two methods, i.e., Johnson-Lindenstrauss Transform (JLT) and Sparse-aware JLT (SJLT), to publish the rating matrix of the source domain using Differential Privacy (DP). 
We theoretically analyze the privacy and utility of our proposed DP based rating publishing methods. 
In the second stage, we propose a novel heterogeneous CDR model (\cdrmodel), which uses deep auto-encoder and deep neural network to model the published source rating matrix and target rating matrix respectively. 
To this end, \modelname~can not only protect the data privacy of the source domain, but also alleviate the data sparsity of the source domain. 
We conduct experiments on two benchmark datasets and the results demonstrate the effectiveness of  \modelname~and \cdrmodel.
\end{abstract}

\keywords{
Cross-domain recommendation; Differential privacy; Privacy-preserving}

\maketitle

\section{Introduction}
\label{sec:intro}

Cross-Domain Recommendation (CDR) has been popularly studied recently, due to its powerful ability to solve the data sparsity issue of traditional recommender systems \cite{zhu2021cross}. 
Most existing CDR models focus on leveraging different kinds of data across multiple domains to improve the recommendation performance of a target domain (or domains) through `bridges'.  % \cite{zhu2020graphical}. 
Here, the data are usually users' private data, e.g., user-item rating, review, and user profile. 
The `bridge' refers to the linked objects between different domains such as common user, common item, and common features. 

% \begin{figure}
% \centering
% \includegraphics[width=6cm]{}
% % \vskip -0.05in
% \caption{A general CDR framework. }
% \label{fig:cdr}
% % \vskip -0.05in
% \end{figure}

\nosection{Privacy issues in CDR}
Existing CDR models assume information can directly `transfer across the bridge', ignoring the privacy issues. 
%
% A general CDR model is shown in Figure \ref{fig:cdr}, and its key 
The key of CDR model is designing transfer strategies across bridges, e.g., cross network \cite{hu2018conet} and dual  transfer network \cite{li2020ddtcdr}. 
% mapping matrix strategy \cite{zhu2018deep}, feature combination strategy \cite{zhu2019dtcdr}, and \ccc{xxx strategy}. 
%
The above models assume all data are plaintext to both domains, which is inconsistent with reality. 
Since these data for CDR models usually involve user sensitive information and cannot be shared with others due to regulation reasons. 
Therefore, how to build CDR models on the basis of protecting data privacy becomes an urgent research problem. 
 
\nosection{Problem definition}
In this paper, we consider a two-domain cross recommendation problem for simplification. 
Specifically, we assume there are two domains (a source domain and a target domain) which have the same user set but different user-item interaction pairs. 
We also assume the user-item interactions only have rating data and leave other side-information as future work. 
The problem of privacy-preserving CDR is to improve the recommendation performance of the target domain by leveraging the data of the source domain, while protecting the data privacy of both domains.

\nosection{Existing work}
The existing privacy-preserving recommendation models are mainly designed for two categories, i.e., individual customers and business partners. 
The former assumes that each client only contains the private data of a single user \cite{nikolaenko2013privacy,hua2015dp,chen2018private}, and the existing work on this either assumes the existence of a semi-honest server or reveals middle information (e.g., latent factors) during model training. 
The later assumes that each client (domain) has the private data of a batch of users \cite{ogunseyi2021privacy,gao2019privacy,gao2019cross,chen2019secure,cui2021exploiting}, which can be seen as privacy-preserving CDR models. % and the existing work on this is either designed for secure social recommendation \cite{chen2019secure} or location recommendation \cite{gao2019privacy} and thus are not suitable to our problem. 
%
%There are also several research on privacy-preserving CDR so far \cite{ogunseyi2021privacy,gao2019privacy,gao2019cross,chen2019secure}. %
However, they are customized to certain settings, e.g., social recommendation \cite{chen2019secure} and  location recommendation \cite{gao2019privacy}, and thus are not suitable to our problem. 

%\nosection{Motivation}
%The proposed privacy-preserving CDR should have two abilities, i.e., \textit{security} and \textit{efficiency}, so as to be applied into practice. 
%%
%First, security is the foundation of privacy-preserving machine learning. 
%%
%Any privacy-preserving machine learning models, including recommendation models, that cannot quantity the information leakage may cause serious security concerns. 
%%
%Thus, the proposed privacy-preserving CDR model should have provable security, or at least should quantity its security. 
%%
%Second, recommender systems are usually built on large-scale datasets, and therefore, efficiency determines whether the proposed privacy-preserving CDR model can be applied in real-world applications. 

\nosection{Our proposal}
In this paper, we propose a novel two stage based privacy-preserving CDR framework (\modelname). 
In \textit{stage one}, the source domain privately publishes its user-item ratings to the target domain. %, using a novel differentially private graph publishing method. 
In \textit{stage two}, the target domain builds a CDR model based on its raw data and the published data of the source domain.

\textit{Design goals.} 
The two stages in \modelname~should have the following goals. 
First, for stage one, user-item rating matrix publishing should not only preserve the data privacy of the source domain, but also facilitate the following CDR task. 
Specifically, there should be the following requirements in rating publishing. 
(1) \textit{Privacy-preserving.} From the published user-item ratings, the target domain should not identify whether a user has rated an item. 
%
% (2) \textit{Distance-invariant.} 
(2) \textit{Restricted isometry property.}
The nature of most recommender systems is collaborative filtering. That is, users who have similar ratings tend to share similar tastes. 
%
% Thus, compared with the original rating matrix, the published rating matrix should be distance-invariant for the same user, so that users can still \textit{collaborate} with each other. 
Thus, the published rating matrix should approximate the original one well, as the restricted isometry property stated
in the literature of matrix completion~\cite{candes2005decoding}. Thus, the users can still \textit{collaborate} with each other. 
(3) \textit{Sparse-awareness.} Data sparsity is a long-standing problem in recommender systems. 
Thus, rating publishing should be able to handle the sparse user-item rating data. 
%
%Second, for stage two, 
After rating publishing in stage one, the user-item rating data becomes heterogeneous in stage two. %, as shown in Figure \ref{fig:example}. 
However, most existing CDR frameworks, e.g., \cite{zhu2019dtcdr,etl,darec19,li2020ddtcdr,hu2018conet}, are symmetrical. 
Thus, a new CDR framework that can handle such heterogeneous data is needed.

% The proposed privacy-preserving CDR should provide two guarantees, i.e., \textit{privacy guarantee} and \textit{utility guarantees}. 
% %
% First, privacy is the essential issue of privacy-preserving machine learning. 
% %
% Any privacy-preserving machine learning models, including recommendation models that cannot quantity the information leakage, may cause serious security concerns. 
% %
% Differential privacy ~\cite{dwork2008differential} is rigorous mathematics definition to measure privacy loss during ML training~\cite{abadi2016deep,bu2019deep}.
% % 
% Second, we build recommender systems on large-scale datasets, and therefore, utility determines whether one can apply the proposed privacy-preserving CDR model in real-world applications. 
% %
% Thus, we provide differential private version CDR under a modest privacy budget with an acceptable cost of computational complexity, training efficiency, and model quality. 

% \begin{figure}[t]
%  \begin{center}
%  \subfigure[The original rating matrix]{
%  		\label{fig:example:orig}
%  		\includegraphics[height=2.2cm]{}}~~
%  \subfigure[The published rating matrix]{
%  		\label{fig:example:publ}
%  		\includegraphics[height=2.2cm]{}}
%  	\vspace{-0.15in}
%  \caption{Example of the user-item rating publishing stage of the source domain.}
%  \vspace{-0.15in}
%  \label{fig:example}
%  \end{center}
% \end{figure}

\textit{Technical solution.} 
In stage one, inspired by the fact that Johnson-Lindenstrauss Transform (JLT) preserves differential privacy \cite{blocki2012johnson}, we propose two methods to publish rating matrix differential privately. 
We first propose JLT based differential private rating matrix publishing mechanism, which can not only protect data privacy but also preserve the similarity between users. %distance between users. 
We then propose Sparse-aware JLT (SJLT) which can further reduce the computation complexity of JLT and alleviate the data sparsity in the source domain with sub-Gaussian random matrix and Hadamard transform. 
We finally theoretically analyze their privacy and utility. 
In stage two, we propose a novel heterogeneous CDR model (\cdrmodel), which uses deep auto-encoder and deep neural network to model the published source rating matrix and target rating matrix, respectively. 
We also propose an embedding alignment module to align the user embeddings learnt from two networks. 

\nosection{Contributions}
% In this paper, we propose a novel differential private CDR model by adopting careful designed random transforms preserving privacy and utility.
We summarize our main contributions as follows: 
(1) We propose \modelname, a general privacy-preserving CDR framework, which has two stages, i.e., rating publishing and cross-domain recommendation modeling. 
(2) We propose two differential private rating publishing algorithms, via JLT and SJLT respectively, we also provide their privacy and utility guarantees. 
(3) We propose \cdrmodel~to handle the heterogeneous rating data between the published source domain and target domain. 
(4) We conduct experiments on two benchmark datasets and the results demonstrate the effectiveness of \modelname~and \cdrmodel.
% \begin{itemize}[leftmargin=*] \setlength{\itemsep}{-\itemsep}
% \item We propose \modelname, a general privacy-preserving CDR framework, which has two stages, i.e., rating publishing and cross-domain recommendation modeling. 
% \item We propose two differential private rating publishing algorithms, via JLT and SJLT respectively, we also provide their privacy and utility guarantees. 
% \item We propose \cdrmodel~to heterogeneous rating data between the published source domain and target domain, and propose an embedding alignment module so that information can transfer across two domains. 
% \item We conduct experiments on three benchmark datasets and the results demonstrate the effectiveness of our proposed \modelname~and \cdrmodel.
% \end{itemize}
\section{Related Work}
\label{sec:rw}
% In this section, we briefly review the literature on cross domain recommendation and privacy-preserving recommendation. 

\subsection{Cross Domain Recommendation}
Cross Domain Recommendation (CDR) is proposed to
handle the cold-start and data sparsity problem commonly existed in the traditional single domain recommender systems \cite{zhu2021cross}. 
The basic assumption of CDR is that different behavioral patterns from multiple domains jointly characterize
the way users interact with items \cite{pan21dual,zhu2019dtcdr}. 
Thus, CDR models can improve the recommendation performance of the single domain recommender systems by transferring the knowledge learnt from other domains using auxiliary information. 
To date, different kinds of knowledge transferring strategies are proposed in CDR, e.g.,  cross
connections \cite{hu2018conet}, dual learning \cite{li2020ddtcdr}, and adversarial training \cite{darec19}. 
However, most existing CDR models assume information can directly `transfer across the bridge', ignoring the privacy issues. 
In this paper, we aim to solve the privacy issue in CDR using differential privacy. 

\subsection{Privacy-Preserving Recommendation}

Most existing privacy-preserving recommendation models are built to protect the data privacy of individual customers (2C) rather than business partners (2B). 
The main difference between  these two types of models is the data distribution on clients. 
The former (2C case) assumes that each client only contains the private data of a single user, while the later (2B case) assumes that each client has the private data of a batch of users. 
The representative work of the 2C models include 
\cite{mcsherry2009differentially,ziqidpmf,nikolaenko2013privacy,hua2015dp,chen2018private}, which use different types of techniques. 
For example, McSherry et al.~\cite{mcsherry2009differentially} presented a differential private recommendation method by adding plain Gaussian noise on the covariance matrix. 
%
% Nikolaenko et al. \cite{nikolaenko2013privacy} applied garbled circuits for secure matrix factorization, and it has high security but low efficiency. 
%
Hua et al. \cite{hua2015dp} adopted differential privacy for matrix factorization, and there is a trade-off between privacy and accuracy. 
Chen et al. \cite{chen2018private} proposed decentralized matrix factorization and can only applied for point-of-interest recommendation. 
%
% Chai et al. \cite{chai2020secure} incorporated homomorphic encryption for federated matrix factorization.
% , but it has two shortcomings. 
% %
% First, a server is needed to aggregate item latent factors, and it assumes the server is semi-honest and not collude with clients, which may not exist in practice. 
% %
% Second, item latent factors are revealed to clients, which is not provable secure and may cause privacy leakage, since item latent factors are learnt using the clients' user latent factors.  
%
The representative work of the 2B model includes \cite{chen2019secure,gao2019privacy,liu2021fedct,ogunseyi2021privacy,cui2021exploiting}. 
First, the model in \cite{chen2019secure} was designed for secure social recommendation, which is not suitable to the situations where all the domains have user-item rating data. 
Second, \cite{gao2019privacy} was designed for location recommendation, which uses differential privacy to protect user location privacy and is not suitable to our problem. 
% using transfer learning technique, which is not provable secure and does not suitable to our problem. 
Besides, \cite{ogunseyi2021privacy} uses somewhat homomorphic encryption and thus %can only scale
is limited to only small-scale dataset (with hundreds of users). 

In summary, the existing privacy-preserving recommendation approaches are either designed for 2C scenario or 2B scenarios that are not suitable to our problem. 
In this paper, we propose to adopt differential privacy for 2B scenario where two domains have the same batch of users but different sets of items.

\section{Preliminaries}
\label{sec:pre}

% In this section, we present some preliminary techniques on constructing random transformations for rating publishing. 

\subsection{User-item Rating Differential Privacy}
Differential privacy is a robust, meaningful, and mathematically rigorous definition of privacy~\cite{dwork2014algorithmic}. 
The goal of differential privacy is to learn information about the population as a whole while protecting the privacy of each individual~\cite{dwork2011firm, dwork2007ad}. 
By a careful design mechanism, differential privacy controls the fundamental quantity of information that can be revealed with changing one single individual ~\cite{jiang2016wishart}.
%% Basic intro of DP 
Before diving into the differential private mechanism for user-item rating matrix publishing, we first introduce some basic knowledge on differential privacy. 

% % neighbor datasets 
% \begin{definition}~\label{def:neigh_data}(Neighbouring Datasets)
% Two datasets $S$ and $S'$ are neighbouring if exactly one element in $S$ is changed arbitrarily to obtain $S'$. 
% Formally, suppose $S = \{s_1, \cdots, s_n\}$ and $S' = \{ s'_1, \cdots, s'_n \}$, there exists $0 < j \leq n$ such that 
% $$
% \begin{cases}
% s_i \neq s_i',  \quad & i = j; \\
% s_i = s_i',  \quad & i \neq j. 
% \end{cases}
% $$
% \end{definition}

% general DP 

% \begin{definition}\label{def:dp} (Differential Privacy \cite{dwork2014algorithmic}). 
% A randomized algorithm $\M$ that takes as input a dataset consisting of individuals is $(\epsilon, \delta)-$differentially private (DP) if for any pair of datasets $S, S'$ that differ in the record of a single individual, and any event $E$, 
% \begin{equation}
% P [\mathcal{M}(x) \in E ] \leq \exp(\epsilon) P[\mathcal{M}(y) \in E] + \delta, 
% \end{equation}
%  If $\delta = 0$, we say that $\mathcal{M}$ is $\epsilon-$differentially private. 
% \end{definition}

In the scenario of recommender systems, we have a rating matrix $\mathbf{R}$ as defined in Definition~\ref{def:uirate}. 
Under such setting, we also define the neighbouring rating matrices in Definition~\ref{def:neigh_rating}. 
%

% rating matrix 
\begin{definition}[User-item Rating Matrix]
Let $\mathbf{U} = \{\mathbf{u}_1, \cdots, \mathbf{u}_m \}$ be a set of users
and $\mathbf{V} = \{\mathbf{v}_1, \cdots, \mathbf{v}_n \}$ be a set of items. 
Define user-item rating matrix $\mathbf{R} = \begin{bmatrix} r_{ij} \end{bmatrix}$ with $r_{ij} \geq 0$ denoting the rating of user $i$ on item $j$. 
\label{def:uirate}
\end{definition}

% \nosection{User-level Neighbouring Rating Matrices}

% neighbor rating matrix 
% \begin{definition}[User-level Neighbouring Rating Matrices]
% %\cite{zamani2020learning}
% Tow rating matrices $\mathbf{R}$ and $\mathbf{R}'$ are neighbouring if exactly one user-item rating for a certain user in $\mathbf{R}$ is changed arbitrarily to obtain $\mathbf{R}'$. 
% Suppose $\mathbf{R} = \begin{bmatrix} r_{ij} \end{bmatrix}_{n \times m}$ and $\mathbf{R}' = \begin{bmatrix} r_{ij}' \end{bmatrix}_{n \times m}$, there exists one pair $(i_0, j_0)$ with $1< i_0 < n$ and $1 < j_0 < m$ such that 
% $$
% \begin{cases}
% r_{ij} \neq r_{ij}',  \quad & i = i_0, \quad j = j_0; \\
% r_{ij} = r_{ij}',  \quad & i = i_0, \quad j \neq j_0. 
% \end{cases}
% $$
% ~\label{def:neigh_user_rating}
% \end{definition}

% \nosection{Rating-level Neighbouring Rating Matrices}

% neighbor rating matrix 
\begin{definition}[Neighbouring Rating Matrices]
%\cite{zamani2020learning}
Two rating matrices $\mathbf{R}$ and $\mathbf{R}'$ are neighbouring if exactly one user-item rating in $\mathbf{R}$ is changed arbitrarily to obtain $\mathbf{R}'$. 
Suppose $\mathbf{R} = \begin{bmatrix} r_{ij} \end{bmatrix}_{n \times m}$ and $\mathbf{R}' = \begin{bmatrix} r_{ij}' \end{bmatrix}_{n \times m}$, there exists one pair $(i_0, j_0)$ with $1< i_0 < n$ and $1 < j_0 < m$ such that 
$$
\begin{cases}
r_{ij} \neq r_{ij}'\quad \text{and} \quad |r_{ij} - r_{ij}'| <1,  \quad & i = i_0, \quad j = j_0; \\
r_{ij} = r_{ij}',  \quad & i \neq i_0, \quad j \neq j_0. 
\end{cases}
$$
~\label{def:neigh_rating}
\end{definition}
%
% Similarly, by reducing Definition~\ref{def:neigh_rating}, we have neighbouring user-item vectors, see Appendix~\ref{append:neigh_user_item}. 

% privacy loss 
To measure the privacy leakage of a randomized algorithm $\M$, we define the privacy loss on neighbouring rating matrices below. %(Definition~\ref{def:neigh_rating}). 

\begin{definition}[Privacy Loss~\cite{dwork2014algorithmic}]
Let $\mathbf{R}$ and $\mathbf{R}'$ be two neighbouring rating matrices, %defined in Definition~\ref{def:neigh_rating}, 
$\M$ be a randomized algorithm, $\M(\mathbf{R})$ and $\M(\mathbf{R}')$ be the probability distributions induced by $\M$ on $\mathbf{R}$ and $\mathbf{R}'$ respectively. 
The privacy loss is a random variable defined as 
$$
\mathcal{L}_M^{\mathbf{R},\mathbf{R}'}(\theta)\doteq\log(P(\mathcal{M}(\mathbf{R})=\theta)/P(\mathcal{M}({\mathbf{R}'})
=\theta).
$$
where $\theta \sim \M(\mathbf{R})$,
\label{def:privacy_loss}
\end{definition}

We want to protect the privacy of each user-item rating pair ($r_{ij}$), i.e., by changing a single $r_{ij}$ such that the output result of $\M$ is almost indistinguishable. 
%
% That is, we want to protect the privacy of each user-item rating pair. 
%
Thus, we define the differential privacy with respect to user-item rating matrix $\mathbf{R}$ in Definition~\ref{def:rating_dp}.

\begin{definition}[User-level Differential Privacy] 
A randomized algorithm $\M$ that takes a user-item vector as input is $(\epsilon, \delta)-$differential private if for any differ-by-one user-item vectors $\mathbf{v}, \mathbf{v}'$, and any event $\mathcal{A}$, 
\begin{equation}
P [\mathcal{M}(\mathbf{v}) \in \mathcal{A} ] \leq \exp(\epsilon) P[\mathcal{M}(\mathbf{v}') \in \mathcal{A}] + \delta. 
\end{equation}
 If $\delta = 0$, we say that $\M$ is $\epsilon-$differentially private. \label{def:user_rating_dp}
\end{definition}

\begin{definition}[Rating Matrix Differential Privacy]
A randomized algorithm $\M$ that takes rating matrix $\mathbf{R}$ as input guarantees $(\epsilon, \delta)-$differential privacy (DP) if for any pair of neighbouring rating matrices $\mathbf{R}, \mathbf{R}'$, and any event $\mathcal{A}$, 
\begin{equation}
P [\mathcal{M}(\mathbf{R}) \in \mathcal{A} ] \leq \exp(\epsilon) P[\mathcal{M}(\mathbf{R}') \in \mathcal{A}] + \delta. 
\end{equation}
 If $\delta = 0$, we say that $\M$ is $\epsilon-$differentially private. \label{def:rating_dp}
\end{definition}

% To evaluate the utility of the random transformations, we further propose the matrix approximation property based on graph approximation property~\cite{blocki2012johnson}. 

% In CDR setting, to achieve the distance-invariant property, we also define the matrix approximation property, based on graph approximation property~\cite{blocki2012johnson}. 
% \huiwen{no explicit use of matrix approximation def.}

% %%Matrix Approximation
% \begin{definition}[Matrix Approximation]
% Define an algorithm $\M$ gives $(\eta, \tau, \nu)-$approximation for matrix $\mathbf{R}$ such that the following condition holds. 
% For nonempty set $S$ and for all $s \in S$, with probability greater or equal than $1 - \nu$, we have that 
% $$
% [(1 - \eta)  \mathbf{R}^{\intercal} \mathbf{R} - \tau \leq \M(\mathbf{R})^{\intercal} \M(\mathbf{R}) \leq (1 + \eta) \mathbf{R}^{\intercal} \mathbf{R}  + \tau ]. 
% $$
% \end{definition}

\subsection{Random Transform}
We present two random transforms which preserve DP. 
% \ccc{citation in the following two definitions?} fix. 

%% JL transform 

\begin{definition}[Johnson-Lindenstrauss Transform (JLT)~\cite{blocki2012johnson}]
The Johnson-Lindenstrauss that transforms a vector $\x$ from $\mathbb{R}^d$ to $\mathbb{R}^{d'}$ is defined by $\Phi: \x \rightarrow \mathbf{M} \x$, where $\mathbf{M} \sim \GN(0, \mathbf{O}_{d' \times d})$, where $\mathbf{O}_{d' \times d}$ is a dense matrix with all elements equal one. 
\label{def:jlt}
\end{definition}

%% FAST JL transform 
% Johnson-Lindenstrauss transform (JLT) is a random projection of $m$ points from a $n-$dimensional space to a lower dimensional space such that the $\ell_2-$norm of transformed vector is maintained with a large probability.
Sparse-aware JLT performs similarly as JLT, which maps a high-dimensional space to a lower-dimensional space with $\ell_2-$norm of transformed vector maintained with a significant probability. 

\begin{definition}[Sparse-aware JLT (SJLT)~\cite{ailon2009fast}]
The SJLT that transforms a sparse vector from $\mathbb{R}^d$ to $\mathbb{R}^{d'}$ is defined by $\Psi: \x \rightarrow \mathbf{M} \x$, where $\mathbf{M} = \mathbf{P H D}$ with $\mathbf{P} \in \mathbb{R}^{d' \times d}$, $\mathbf{H} \in \mathbb{R}^{d \times d}$, $\mathbf{D} \in \mathbb{R}^{d \times d}$
% $$
% \Psi: \x \rightarrow \mathbf{M} \x, \mathbf{M} = \mathbf{P H D}, \text{with}~ \mathbf{P} \in \mathbb{R}^{d' \times d}, \mathbf{H} \in \mathbb{R}^{d \times d}, \text{and}~ \mathbf{D} \in \mathbb{R}^{d \times d}, 
% $$
\begin{equation}\label{eq:def-p}
\mathbf{P}_{ij} = 
\begin{cases}
0 \quad & \text{with probability }\quad 1-q; \\
\xi \sim \GN(0, q^{-1}) \quad & \text{with probability }\quad q;
\end{cases}
\end{equation}
$
\mathbf{H}_{ij} = d^{-1/2} (-1)^{(i-1, j-1)}
$
is the normalized Hadamard matrix where $(i, j) = \sum_k i_k j_k \mod 2$ is the bit-wise inner product mod 2, and $\mathbf{D}$ is a random diagonal matrix where 
%$\mathbf{D}_{ii} = \pm 1$ with equal probability. 
\begin{equation}\label{eq:def-d}
\mathbf{D}_{ii} = \pm 1 \quad \text{with equal probability.}
% \mathbf{D}_{ii}
% = 
% \begin{cases}
% -1 \quad & \text{with probability }\quad 1/2; \\
% 1 \quad & \text{with probability }\quad 1/2.
% \end{cases}
\end{equation}

% \begin{itemize}
% \item 
% %$P \in \mathbb{R}^{r \times d}$, $d = {n \choose 2}$ 
% $
% \mathbf{P}_{ij} = 
% \begin{cases}
% 0 \quad & \text{with probability }\quad 1-q; \\
% \xi \sim \GN(0, q^{-1}) \quad & \text{with probability }\quad q;
% \end{cases}
% $\\
% \item 
% %$H \in \mathbb{R}^{d \times d}$ 
% $
% \mathbf{H}_{ij} = d^{-1/2} (-1)^{(i-1, j-1)},
% $
% is the normalized Hadamard matrix, where $(i, j) = \sum_k i_k j_k \mod 2$ is the bit-wise inner product mod 2;\\
% \item 
% %$D \in \mathbb{R}^{d \times d}$ 
% $
% \mathbf{D}_{ii}
% = 
% \begin{cases}
% -1 \quad & \text{with probability }\quad 1/2; \\
% 1 \quad & \text{with probability }\quad 1/2.
% \end{cases}
% $\\
% \end{itemize}
\label{def:fjlt}
\end{definition}

\section{The Proposed Framework}
\label{sec:model}

In this section, we first describe the notations and then present the framework of \modelname~and its two stages. 

% In this section, we first present the notations used throughout this paper. We then give an overview of our proposed privacy-preserving CDR algorithm ~\modelname. Thirdly, we go through the details of differential private rating matrix publishing algorithms. Finally, we present~\cdrmodel. 

\subsection{Notations}
\label{sec:model:notation}
We assume there are only two domains, i.e., domain $\mathcal{A}$ and domain $\mathcal{B}$, which have the same batch of users but different user-item pairs. 
We assume domain $\mathcal{A}$ has a private rating matrix $\mathbf{R}^A \in\mathbb{R}^{m\times n_1}$, and domain $\mathcal{B}$ holds another private rating matrix $\mathbf{R}^B \in\mathbb{R}^{m\times n_2}$, where $m$ denotes the number of users, $n_1$ and $n_2$ denote the number of items on domain $\mathcal{A}$ and domain $\mathcal{B}$, respectively. 
%
% We denote the user latent factor matrix as $\textbf{U}\in\mathbb{R}^{k\times m}$ and 
We use $\mathbf{U}_i^A$ and $\mathbf{U}_i^B$ to denote the latent embedding of the user $i$ on the published source domain and target domain, respectively. 
We also use $\mathbf{V}_j^B$ to denote the latent embedding of item $j$ on the target domain. 
% We denote the item latent factor matrix on domain $\mathcal{A}$ and domain $\mathcal{B}$ as $\textbf{V}_A\in\mathbb{R}^{k\times n_1}$ and $\textbf{V}_B\in\mathbb{R}^{k\times n_2}$, where $k$ is the dimension of latent factors. 
%
% Besides, we define two indicator matrices $\mathbf{I}^A\in\mathbb{R}^{m\times n_1}$ and $\mathbf{I}^B\in\mathbb{R}^{m\times n_2}$, where $I_{i,j}$ denotes whether user $i$ has rated item $j$ on each domain. 
%
Notice that for a matrix $\mathbf{X}$, we denote its $i$-th column as $\mathbf{x}_{i}$.
We summarize the main notations in Appendix~\ref{sec:app:note}.

% \subsection{Motivation}
% As we have described in Section \ref{sec:intro}, privacy-preserving CDR requires both `privacy' and `efficiency' so that it can be applied into practice. 
% %
% First, `privacy' means that there should not be any information leakage during model training, including the user-item ratings that directly show user privacy and user-item latent factors that implicitly reflect user privacy. 
% %
% Therefore, rating, user latent factors, and item latent factors should all be kept securely for security concern during model training. 
% %
% Second, `efficiency' refers to that the proposed model should have good (\ccc{maybe linear}) scalability, which means that the proposed model should be able to handle data sparsity problem. 

% In the following of this section, we will first propose \modelone, which aims to solve the privacy issue of CDR. We will then propose \modeltwo, which improves \modelone~and solves its efficiency issue. 

\begin{figure}[t]
 \begin{center}
 \includegraphics[width=\columnwidth]{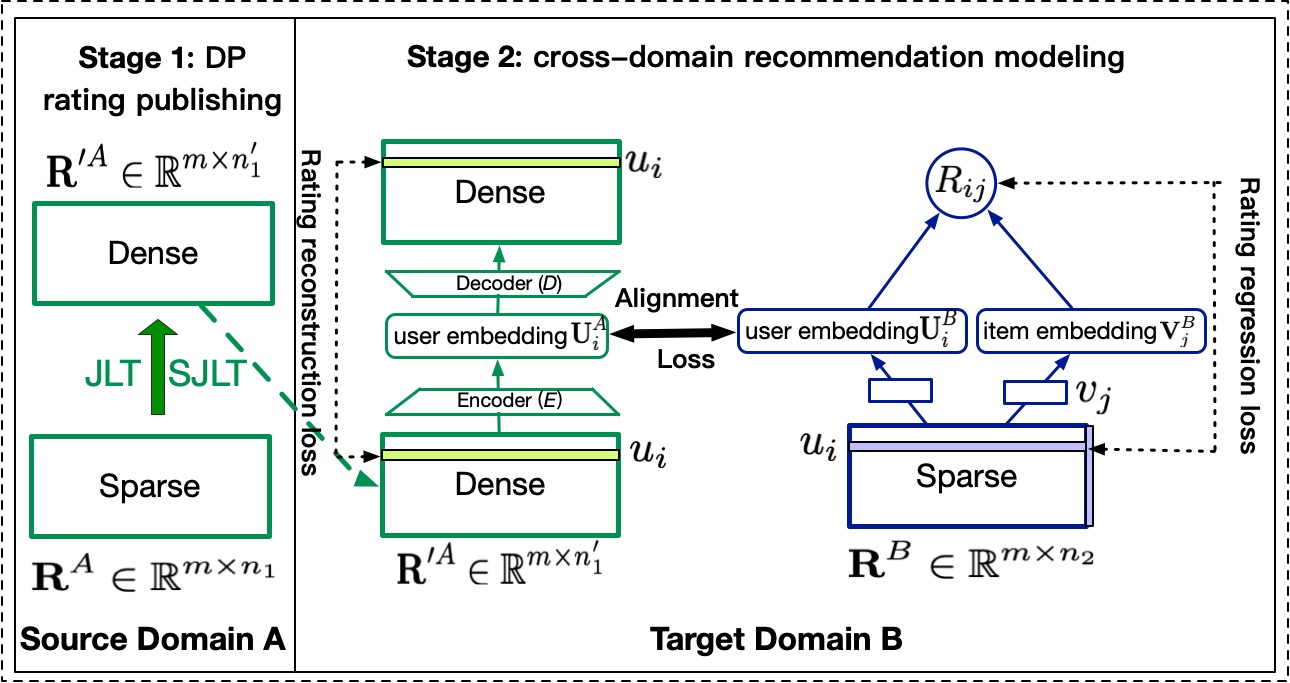}
 \vspace{-0.2 in}
 \caption{Framework of \modelname.}
 \vspace{-0.15 in}
 \label{fig:framework}
 \end{center}
\end{figure}

\subsection{Overview of \modelname}
We propose \modelname, a general privacy-preserving CDR framework. 
Similar as most existing CDR models, \modelname~aims to transfer the knowledge learnt from the source domain to help improve the recommendation performance of the target domain. 
However, different from the existing CDR models that assume the data from both the source and target domains are available, \modelname~aims to build a CDR model on the basis of protecting the data of both domains.  

\modelname~consists of two stages, i.e., \textit{rating publishing} and \textit{cross-domain recommendation modeling}, as is shown in Figure \ref{fig:framework}.
First, as described in Section \ref{sec:intro}, rating publishing mainly has three goals, i.e., \textbf{G1}: \textit{privacy-preserving}, \textbf{G2}: \textit{restricted isometry property}, and \textbf{G3}: \textit{sparse-aware}. 
To achieve these goals, we propose two differential private rating matrix publishing algorithms, including JLT and SJLT. 
The former one achieves \textit{privacy-preserving} and \textit{restricted isometry property} while the later one further achieves \textit{sparse-aware}, which will be further demonstrated both in theoretically in Sec.~\ref{sec:ana} and experimentally in Sec.~\ref{sec:experi}. 
Second, to handle the heterogeneity between the published rating matrix of the source domain and the original rating matrix of the target domain, we propose a new cross-domain recommendation model (i.e., \cdrmodel). 
~\cdrmodel~has three parts, a \textit{rating reconstruction module} for the published source domain, a \textit{rating regression module} for the target domain, and a \textit{user embedding alignment module} between the user embeddings of the source domain and target domain. 

With our proposed two stages, \modelname~can not only protect rating privacy, but also take advantage of the characteristics of both domains. 
Thus, \modelname~can achieve comparable even better performance than most existing plaintext CDR models, as will be reported in Sec.~\ref{sec:experi}. 
We will present these two stages in details. 

\begin{algorithm}[t]
\caption{Differential Private Rating Matrix Publishing }\label{alg:dp_rating_pub}
\KwIn{ A rating matrix $\mathbf{R}$; 
parameters: $\epsilon, \delta, \eta, \mu > 0, q = O(\frac{1}{m}) \in (0,1)$; 
number of users $m$;
number of items in the source domain: $n_1$ }
\KwOut{ Privacy preserving rating matrix $\mathbf{\tilde{R}}$ }

{
\# \textbf{Perturb the singular values of normalized $\mathbf{R}$} \\\label{alg:dp_rating_pub:begin}
Subtract the mean from $\mathbf{R}$ by computing 
%$\mathbf{R} \leftarrow \mathbf{R} - \frac{1}{m} \mathbf{1} \mathbf{1}^{\intercal} \mathbf{R};$
$r_{ij} \rightarrow r_{ij} - \frac{1}{m}\sum_{k=1}^m r_{ik};$
\\ \label{alg:dp_rating_pub:submean}
Compute $n'_1 = {8 \ln(2/\mu)}/{(\eta^2)}$ and $w = \sqrt{32 n'_1 \ln(2/\delta)}/{\epsilon} \ln(4 n'_1 /\delta);$\\
Compute the SVD of $\mathbf{R} = \mathbf{U \Sigma V^{\intercal}}$; \\
% \# \textbf{Standardize rating matrix} \\
% \# \textbf{Perturb singular values} \\
Set $\mathbf{R} \leftarrow \mathbf{U} \sqrt{\mathbf{\Sigma}^2 + w^2 \mathbf{I}} \mathbf{V}^{\intercal},$ with $\mathbf{I}$ a identity matrix. \\ \label{alg:dp_rating_pub:perfinal}
\# \textbf{Generate random matrix \textbf{M} by JLT}\\ \label{alg:dp_rating_pub:tranbegin}
Draw i.i.d. samples from $\GN(0,1)$, and form a matrix $\textbf{M} \in\mathbb{R}^{n'_1 \times n_1}$; \\
\# \textbf{Generate random matrix \textbf{M} by SJLT}\\
Generate a sparse Johnson-Lindenstrauss matrix $\mathbf{P}$ of size $n'_1 \times n_1$ using Equation \eqref{eq:def-p} \\
% ,  with 
% $$
% \mathbf{P}_{ij} \leftarrow
% \begin{cases}
% 0 \quad & \text{with probability }\quad 1-q; \\
% \xi \sim \GN(0, q^{-1}) \quad & \text{with probability }\quad q.
% \end{cases}
% $$ \\
Generate a normalized Hadamard matrix of size $n_1 \times n_1$ with 
$
\mathbf{H}_{ij} \leftarrow {n_1}^{-1/2} (-1)^{(i-1, j-1)},
$
where $(i, j) = \sum_k i_k j_k \mod 2$ is the bit-wise inner product of index $i, j$; \\
% where $(i, j)$ is the dot-product of index $i, j$ expressed in binary. \\
Generate a diagonal matrix of size $n_1 \times n_1$ using Equation \eqref{eq:def-d}\\
% with 
% $$
% \mathbf{D}_{ii}
% \leftarrow
% \begin{cases}
% -1 \quad & \text{with probability }\quad 1/2; \\
% 1 \quad & \text{with probability }\quad 1/2.
% \end{cases}
% $$ \\
Compute \textbf{M} by $\textbf{M} = \mathbf{PHD}$ \\
\# \textbf{For both JLT and SJLT} \\
Transform rating matrix $\mathbf{\tilde{R}} \leftarrow \frac{1}{\sqrt{n'_1}} \mathbf{M R};$ \\ \label{alg:dp_rating_pub:tranend}
%Compute perturbed Laplacian matrix  $\tilde{L} = \frac{1}{r} E^{\intercal} (PHD)^{\intercal} PHD E $;\\
\textbf{return} The perturbed matrix $\mathbf{\tilde{R}}$. \\
}
\end{algorithm} 

\subsection{Differential Private Rating Publishing}

We first present the rating publish stage of \modelname. 
We protect the source domain data by publishing the source domain rating matrix to the target domain utilizing differential privacy.
To protect the individual user-item rating privacy, we want the rating publishing stage to satisfy $(\epsilon, \delta)-$differential privacy. 
For this purpose, we propose to publish the rating matrix in the source domain differential privately via JLT or SJLT, as is shown in Algorithm~\ref{alg:dp_rating_pub}. 
The whole process of the rating publish stage can be divided into two procedures, i.e., \textit{perturb the singular values of the normalized rating matrix} (lines \ref{alg:dp_rating_pub:begin}-\ref{alg:dp_rating_pub:perfinal}) and \textit{transform the perturbed rating matrix by JLT or SJLT} (lines \ref{alg:dp_rating_pub:tranbegin}-\ref{alg:dp_rating_pub:tranend}). 
We now describe these procedures in details. 

%
% \ccc{revise according to algo 1}
% In Algorithm~\ref{alg:dp_graph}, the first step is to compute compressed dimension $r$ and re-weight parameter $w \in (0, m)$, 
% with $m$ denoting the number of users. 
\nosection{Explanation of hyper-parameters}
First of all, we introduce the hyper-parameters used Algorithm~\ref{alg:dp_rating_pub}. $\epsilon$ and $\delta$ are privacy-related hyper-parameters. We tune $\epsilon$ for different privacy budget and fix $\delta=1/n$, where $n$ is number of input data. 
$\mu$ and $\eta$ are utility-related hyper-parameters to determine the subspace dimension $n_1'$ after applying JLT or SJLT. The larger $\mu$ and $\eta$, the smaller $n_1'$, the smaller probability we get for the concentration results in Theorem~\ref{thm:RIP}. 

\nosection{Perturb the singular values of the normalized rating matrix}
Before applying random transforms on rating matrix, we apply singular value perturbation, which consists of four steps. 
The first step is to subtract the mean from rating matrix $\mathbf{R}$ (line \ref{alg:dp_rating_pub:submean}).
The second step is to compute the compressed dimension $n_1'$ and singular value perturbation $w$ (line 3). 
%
% The second step is to re-weight input rating matrix $\mathbf{R}$ with weight $\frac{w}{m}$.
%
% The third step is to generate a Gaussian orthogonal random matrix of size $r \times d$, where $r$ is the compressed dimension and $d = n_1$ is the number of items in source domain. 
The third step is to compute the singular value decomposition of matrix $\mathbf{R}$ and get the singular values as the non-zero elements of the diagonal matrix $\Sigma$ (line 4). 
The fourth step is to modify the singular values of $\mathbf{R}$ by $\sqrt{\sigma_i^2 + w^2}$, where $\sigma_i = \Sigma_{ii}$ (line 5). 
By exerting proper singular value perturbation on the randomized rating matrix, we get a perturbed rating matrix with the same singular vectors but a different spectrum from the original rating matrix. 
In recommender systems, it is common to use the singular vectors corresponding to top singular values. 
Thus, we preserve the privacy by perturbing the singular values while maintaining utility by keeping singular vectors unchanged, which achieves \textbf{G1}: \textit{privacy-preserving}. 

\nosection{Transform the perturbed rating matrix by JLT or SJLT} 
After the first procedure, one can transform the perturbed rating matrix randomly via JLT (line 7 and line 14) or SJLT (line 9-12 and line 14). 
%whose main difference is how to generate the random matrix $\mathbf{M}$. 
%
By applying random transform, we exert a proper randomness on rating matrix where the magnitude is determined by perturbation parameter $w$ in the previous procedure. 
By the expectation approximation result (Proposition~\ref{prop:mean_approx}) and 
concentration result of Gaussian or sub-Gaussian random matrix ( Proposition~\ref{prop:jl}), we conclude that JLT and SJLT achieve \textbf{G2:} \textit{restricted isometry property} ( see Theorem~\ref{thm:RIP}). 
The main difference between JLT and SJLT is how to generate the random matrix $\mathbf{M}$.
The random matrix $\mathbf{M}$ of JLT is generated by Gaussian ensemble, while that of SJLT is generated by sub-Gaussian ensemble and random Hadamard matrix, as described in Definition~\ref{def:fjlt}. 
By singular value perturbation and random transform with JLT or SJLT, we get a differentially privately published rating matrix $\mathbf{\tilde{R}}$. 
Compared with JLT, 
(1) SJLT can reduce the computation complexity of matrix multiplication by utilizing random Hardmard transform with computation complexity of $O(d \log d m)$ while JLT costs $O(d n m)$ (line 14).
(2) the random hardmard transform $\mathbf{HD}$ in SJLT serves as a precondition when input $x$ is sparse (verified by Lemma~\ref{lem:rhd}), %and the non-zeros are too rare to ensure a sufficiently strong measure concentration, which will be verified by Lemma~\ref{lem:rhd}, 
and thus SJLT achieves \textbf{G3:} \textit{sparse-awareness}.

\subsection{CDR Modeling}

We then present the cross-domain modeling stage of \modelname. 
The cross-domain modeling stage is done by the target domain by using the published rating matrix of the source domain and the original rating matrix of the target domain. 

We propose a novel cross-domain recommendation model, i.e., \cdrmodel, to solve the heterogeneity between the published rating matrix of the source domain and the original rating matrix of the target domain. 
\cdrmodel~has three parts: (1) a rating reconstruction module for the differential privately published source domain; (2) a rating regression module for the target domain; and (3) a user embedding alignment module between the user embeddings of the source domain and target domain. 
We will describe these three modules separately in detail. 

\nosection{Rating reconstruction of the source domain}
Let $\mathbf{R}^A \in\mathbb{R}^{m\times n_1}$ be the private rating matrix of the source domain, $\textbf{R}'^A \in\mathbb{R}^{m\times n'_1}$ be the published rating matrix of the source domain, and $n'_1 < n_1$. 
Considering that $\mathbf{R}'^A$ is a dense matrix with each row denoting the perturbed behaviors of each user, we use auto-encoder to learn user embedding. 
Specifically, we can first obtain the user embedding by the encoder, i.e., $\mathbf{U}_i^A=E(\mathbf{R}'^A_i) \in\mathbb{R}^{m\times h}$, where $h$ is the dimension of user embedding. 
The autoencoder can then reconstruct the perturbed ratings of each user by $\hat{\mathbf{R}'}^A_i = D(\mathbf{U}_i^A) \in\mathbb{R}^{m\times n'_1}$.
After it, the reconstruction loss is given by:
\begin{equation}
\begin{aligned}
\label{equ:recon}
\mathcal{L}_{rec} = \sum_{i=1}^m  \mathcal{F}(\mathbf{R}'^A_i, \hat{\mathbf{R}'}^A_i) ,
\end{aligned}
\end{equation}
where $\mathcal{F}(\cdot,\cdot)$ is the mean square loss.
With the rating reconstruction module, the encoder and decoder can model the perturbed user-item rating interactions in the source domain. 

\nosection{Rating regression of the target domain}
The rating regression module in the target domain is similar as the existing deep neural network based recommendation models, e.g., Deep Matrix Factorization (DMF) \cite{xue2017deep} and Neural Matrix Factorization (NeuMF) \cite{he2017neural}. 
In this paper, we take DMF as an example. 
The key idea of DMF is to minimize the cross-entropy between the true ratings $\textbf{R}^B_{ij}$ and the predicted ratings $\hat{\mathbf{R}}^B_{ij}$, where the predicted ratings is obtained by multiple fully-connected layers. 
Specifically, we first get the user and item latent embeddings $\mathbf{U}_i^B \in\mathbb{R}^{m\times h}$ and $\mathbf{V}_j^B \in\mathbb{R}^{n^2 \times h}$ with $h$ denoting the embedding dimension, and then the predicted ratings are the cosine similarities between user and item latent embeddings, i.e., $\hat{\mathbf{R}}^B_{ij} = (\mathbf{U}_i^B)^T \mathbf{V}_j^B$. 
Its loss function is given as:
\begin{equation}
% \tiny
\begin{aligned}
\mathcal{L}_{reg} = \sum_{i=1}^m \sum_{j=1}^{n_2} \left(
 \frac{\mathbf{R}^B_{ij}}{\max(\mathbf{R}^B)} log (\hat{\mathbf{R}}^B_{ij}) + (1-\frac{\mathbf{R}^B_{ij}}{\max(\mathbf{R}^B)}) log (1 - \hat{\mathbf{R}}^B_{ij})) \right) ,
\end{aligned}
\end{equation}
where $\max(\mathbf{R}^B)$ is the maximum rating in the target domain.
With the rating regression module, we can model the user-item rating interactions in the target domain.

\nosection{User embedding alignment between source and target domains}
Although the rating reconstruction module and the rating regression module can model the perturbed ratings of the source domain and the original ratings of the target domain respectively, the knowledge learnt from the source domain cannot be transferred to the target domain yet. 
To facilitate the knowledge transfer between the source and target domains, we further propose a user embedding alignment module. 
The user embedding alignment module aims to match the user embedding learnt from the rating reconstruction module and the user embedding learnt from the rating regression module. 
Our motivation is that users tend to have similar preferences across different domains. 
For example, users who like romantic movies are likely to like romantic book as well. 
Formally, the user embedding alignment loss is given as:
\begin{equation}
\begin{aligned}
\mathcal{L}_{ali} = \sum_{i=1}^m {\|\mathbf{U}_i^A - \mathbf{U}_i^B\|}^2 . 
\end{aligned}
\end{equation}

\nosection{Total loss}
The loss of \cdrmodel~is a combination of the three type of losses above. That is, 
\begin{equation}
\begin{aligned}
\mathcal{L} = \mathcal{L}_{rec} + \mathcal{L}_{reg} + \alpha \mathcal{L}_{ali},
\end{aligned}
\end{equation}
where $\alpha$ is a hyper-parameter that controls the strength of the user embedding alignment module. 

% \nosection{Model training}
 
\section{Analysis}\label{sec:ana}

% In this section, we analyze the privacy and utility of Algorithm~\ref{alg:dp_rating_pub}.  

%---------------------------------------------------------------------------------------------------------------------------------------------------
%
%									Privacy Analysis 						
%
%---------------------------------------------------------------------------------------------------------------------------------------------------

\subsection{Privacy Analysis}

% Differential Privacy~\cite{dwork2014algorithmic} is a rigorous mathematics definition to measure the difficulty for an adversary to identify the presence or absence of any individual in a data-set via leveraging the output of a random algorithm $\M$. 
% %
% A general way to analyze the privacy of a random algorithm $\M$ is to tail bound the privacy loss define in Definition~\ref{def:privacy_loss}. 
%
To analyze the privacy of Algorithm~\ref{alg:dp_rating_pub}, we study how the output distribution changes for neighbouring rating matrices and how to achieve both user-level DP and rating matrix DP. %in  Definition~\ref{def:neigh_user_rating} and ~\ref{def:rating_dp}. 
%
%---------------------------------------------------------------------------------------------------------------------------------------------------
%
%									sub1: User-level Differential Privacy 				
%
%---------------------------------------------------------------------------------------------------------------------------------------------------

\nosection{User-level differential privacy}
We first present the user-level differential privacy, i.e., privacy loss for a single row in differential private published rating matrix. 
%
% Inspired by the result that JLT preserves DP~\cite{blocki2012johnson}, we further study the privacy-preserving property of SJLT~\cite{ailon2009fast}. 
%
In Theorem~\ref{thm:user_dp}, we show that both JLT and SJLT achieve user-level DP. 
%
% For details, check appendix \ref{sec:append_user_dp}. 

\begin{theorem} [User-level Differential Privacy]
\label{thm:user_dp}
Suppose $\mathbf{R}$ and $\mathbf{R'}$ are two neighbouring rating matrices as defined in Definition~\ref{def:neigh_rating}. 
Let $\mathbf{Y} \sim \mathbf{N}(0, 1)$ be a random vector. 
Define the two distributions of $\mathbf{R^{\intercal}Y}$ and $\mathbf{R'^{\intercal}Y}$ respectively as 
\begin{eqnarray*}
\pdf_{\mathbf{R}^{\intercal} \mathbf{Y}} & = & \frac{1}{\sqrt{ (2 \pi)^{n_1} \rm{det} (\mathbf{R}^{\intercal} \mathbf{R})}}
\exp \left( - \frac{1}{2} \mathbf{x}^{\intercal} \left( \mathbf{R^{\intercal} R} \right)^{-1} \mathbf{x} \right), \\
\pdf_{\mathbf{R'}^{\intercal} \mathbf{Y}} & = & \frac{1}{\sqrt{ (2 \pi)^{n_1} \rm{det} (\mathbf{R'}^{\intercal} \mathbf{R'})}}
\exp \left( - \frac{1}{2} \mathbf{x}^{\intercal} \left( \mathbf{R'^{\intercal} R'} \right)^{-1} \mathbf{x} \right). \\
\end{eqnarray*}
Fix $\epsilon_0 = \frac{\epsilon}{\sqrt{4r \ln(2/\delta)}}$ and $\delta_0 = \frac{\delta}{2r}$, and  
denote $\mathcal{S} = \{ x: e^{- \epsilon_0} \pdf_{\mathbf{R'^{\intercal} Y}} (x) \leq \pdf_{\mathbf{R^{\intercal} Y}}(x) \leq e^{\epsilon_0} 
\pdf_{\mathbf{R'^{\intercal} Y}} (x) \}$, 
then we have 
$\Pr[\mathcal{S}] \geq 1 - \delta_0$. 
Thus Algorithm~\ref{alg:dp_rating_pub} preserves $(\epsilon_0, \delta_0)-$DP for each row in the published rating matrix. 
\end{theorem}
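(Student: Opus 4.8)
The plan is to follow the argument showing that the Johnson--Lindenstrauss transform is itself differentially private~\cite{blocki2012johnson}, now applied to the singular-value--perturbed matrix $\mathbf{R}$ produced in lines~\ref{alg:dp_rating_pub:begin}--\ref{alg:dp_rating_pub:perfinal} of Algorithm~\ref{alg:dp_rating_pub}. First I would identify the distribution of a single released row. Viewed as a column vector, each row of $\tilde{\mathbf{R}}=\frac{1}{\sqrt{n'_1}}\mathbf{M}\mathbf{R}$ is $\frac{1}{\sqrt{n'_1}}\mathbf{R}^{\intercal}\mathbf{Y}$, where $\mathbf{Y}$ is the corresponding row of $\mathbf{M}$; for JLT this $\mathbf{Y}\sim\mathbf{N}(0,\mathbf{I})$, so $\mathbf{R}^{\intercal}\mathbf{Y}\sim\mathbf{N}(0,\mathbf{R}^{\intercal}\mathbf{R})$ with exactly the densities $\pdf_{\mathbf{R}^{\intercal}\mathbf{Y}}$, $\pdf_{\mathbf{R'}^{\intercal}\mathbf{Y}}$ in the statement, and for SJLT the factor $\mathbf{HD}$ is orthogonal (so it leaves $\mathbf{R}^{\intercal}\mathbf{R}$ unchanged) while the rows of $\mathbf{P}$ are sub-Gaussian, so the same estimates go through with Gaussian concentration replaced by its sub-Gaussian counterpart (Proposition~\ref{prop:jl}). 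The common scalar $\frac{1}{\sqrt{n'_1}}$ is post-processing and cancels in every likelihood ratio, so it suffices to bound the privacy loss, which for two centred Gaussians is
\begin{equation*}
\mathcal{L}_M^{\mathbf{R},\mathbf{R}'}(x)=\tfrac12\log\frac{\det(\mathbf{R'^{\intercal} R'})}{\det(\mathbf{R^{\intercal} R})}+\tfrac12\,x^{\intercal}\!\left((\mathbf{R'^{\intercal} R'})^{-1}-(\mathbf{R^{\intercal} R})^{-1}\right)\!x .
\end{equation*}

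Next I would use the neighbouring structure of Definition~\ref{def:neigh_rating}: only one entry changes, so if $\mathbf{a},\mathbf{a}'$ are the single affected row of $\mathbf{R},\mathbf{R}'$ then $\mathbf{R'^{\intercal} R'}=\mathbf{R^{\intercal} R}+\mathbf{a}'\mathbf{a}'^{\intercal}-\mathbf{a}\mathbf{a}^{\intercal}$, a perturbation $\Delta$ of rank at most two whose operator norm is $O(1)$ (from $\|\mathbf{a}-\mathbf{a}'\|<1$ together with the normalised row-norm bound). The singular-value perturbation $\sigma_i\mapsto\sqrt{\sigma_i^2+w^2}$ forces $\lambda_{\min}(\mathbf{R^{\intercal} R})\ge w^2$ and $\lambda_{\min}(\mathbf{R'^{\intercal} R'})\ge w^2$, hence $\|(\mathbf{R^{\intercal} R})^{-1}\|,\|(\mathbf{R'^{\intercal} R'})^{-1}\|\le w^{-2}$. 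Since the nonzero eigenvalues of $\Delta(\mathbf{R^{\intercal} R})^{-1}$ and $\Delta(\mathbf{R'^{\intercal} R'})^{-1}$ are then at most $O(w^{-2})$ in magnitude, the matrix-determinant lemma for the rank-two update gives $\big|\tfrac12\log\det(\mathbf{R'^{\intercal} R'})-\tfrac12\log\det(\mathbf{R^{\intercal} R})\big|=O(w^{-2})$, a purely deterministic contribution of size at most $\epsilon_0/2$ for the prescribed choice of $w$.

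For the stochastic term I would write $x=(\mathbf{R^{\intercal} R})^{1/2}\mathbf{g}$ with $\mathbf{g}\sim\mathbf{N}(0,\mathbf{I})$, so that $x^{\intercal}\!\left((\mathbf{R'^{\intercal} R'})^{-1}-(\mathbf{R^{\intercal} R})^{-1}\right)\!x=\mathbf{g}^{\intercal}\mathbf{B}\mathbf{g}$ with $\mathbf{B}$ symmetric and of rank at most two, because $(\mathbf{R'^{\intercal} R'})^{-1}-(\mathbf{R^{\intercal} R})^{-1}=-(\mathbf{R^{\intercal} R})^{-1}\Delta(\mathbf{R'^{\intercal} R'})^{-1}$ inherits $\mathrm{rank}(\Delta)\le2$; its eigenvalues $\mu_1,\mu_2$ equal $-\lambda_j(\Delta(\mathbf{R'^{\intercal} R'})^{-1})$ and satisfy $|\mu_j|=O(w^{-2})$. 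Diagonalising, $\mathbf{g}^{\intercal}\mathbf{B}\mathbf{g}=\mu_1z_1^2+\mu_2z_2^2$ with $z_1,z_2\sim\mathbf{N}(0,1)$ i.i.d., so $|\mathbf{g}^{\intercal}\mathbf{B}\mathbf{g}|\le\max_j|\mu_j|\cdot(z_1^2+z_2^2)$ is dominated by a scaled $\chi^2_2$ variable; its exponential tail yields, with probability at least $1-\delta_0$ over $\mathbf{g}$, $|\mathbf{g}^{\intercal}\mathbf{B}\mathbf{g}|=O\!\big(w^{-2}\log(1/\delta_0)\big)$. Substituting $\delta_0=\delta/2r$ and the prescribed $w=\sqrt{32n'_1\ln(2/\delta)}\,\ln(4n'_1/\delta)/\epsilon$ (equivalently $w=\Theta\!\big(\epsilon_0^{-1}\log(n'_1/\delta)\big)$) drives this below $\epsilon_0$, so together with the determinant bound $|\mathcal{L}_M^{\mathbf{R},\mathbf{R}'}(x)|\le\epsilon_0$ on an event of probability at least $1-\delta_0$.

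That event is exactly $\mathcal{S}$, so $\Pr[\mathcal{S}]\ge1-\delta_0$. To finish, I would invoke the standard fact that a pointwise density-ratio bound holding outside a set of mass $\le\delta_0$ implies $(\epsilon_0,\delta_0)$-DP: for any event $\mathcal{A}$, split $P[\mathcal{M}(\mathbf{R})\in\mathcal{A}]$ along $\mathcal{S}$ and $\mathcal{S}^{c}$, use the ratio bound on $\mathcal{S}$ and $\Pr[\mathcal{S}^{c}]\le\delta_0$ on the complement, and apply this symmetrically in $\mathbf{R}$ and $\mathbf{R}'$; this concludes that every released row is $(\epsilon_0,\delta_0)$-DP. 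The main obstacle I expect is quantitative rather than structural: carefully tracking the constants in the $\chi^2$ concentration and in the determinant/operator-norm estimates (and the dependence on the row-norm normalisation) so that they collapse precisely to the $w$, $\epsilon_0$, $\delta_0$ fixed by the algorithm, and making the sub-Gaussian (SJLT) variant of the concentration step fully rigorous since there a released row is only approximately Gaussian.
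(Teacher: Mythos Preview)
Your proposal is correct and follows essentially the same route as the paper: both split the log-likelihood ratio of the two Gaussians $\mathbf{R}^{\intercal}\mathbf{Y}$ and $\mathbf{R'}^{\intercal}\mathbf{Y}$ into a determinant term and a quadratic-form term, bound each separately using that $\lambda_{\min}\ge w^2$ after the singular-value perturbation, and then handle SJLT by observing that $\mathbf{HD}$ is orthogonal while the rows of $\mathbf{P}$ are sub-Gaussian. The only cosmetic differences are that the paper invokes Lidskii's interlacing theorem on the rank-one perturbation $\mathbf{R}-\mathbf{R'}=\mathbf{e}_i\mathbf{v}^{\intercal}$ for the determinant ratio (you use the matrix-determinant lemma on the rank-two update of $\mathbf{R}^{\intercal}\mathbf{R}$), and the paper simply quotes the bound $2(1/w+1/w^2)\ln(4/\delta_0)$ from \cite{blocki2012johnson} for the quadratic form (you redo this via an explicit $\chi^2_2$ tail bound); note in particular that their bound carries a $1/w$ term in addition to your $1/w^2$, which is what drives the exact expression for $w$ in the algorithm.
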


\begin{proof}
We present its proof in Appendix~\ref{sec:append_user_dp}.
\end{proof}
 
%
%The only difference between JLT and SJLT is the method to generate the random matrix. 
%
% JLT is generated under Gaussian distribution, see Definition~\ref{def:jlt} while SJLT is generated by the product of the sub-Gaussian matrix $P$, the normalized Hadamard random matrix, and a diagonal random matrix. 
%

%---------------------------------------------------------------------------------------------------------------------------------------------------
%
%									sub2: Rating Matrix Differential Privacy 					
%
%---------------------------------------------------------------------------------------------------------------------------------------------------

\nosection{Rating matrix differential privacy}
Based on user-level privacy guarantees (Theorem~\ref{thm:user_dp}) and the k-fold composition theorem (Theorem~\ref{thm:k_fold_comp} in Appendix), we present differential privacy with respect to rating value of Algorithm~\ref{alg:dp_rating_pub}.

\begin{theorem}[Rating Matrix Differential Privacy] %\cite{blocki2012johnson}
%of Algorithm~\ref{alg:dp_graph}]
Algorithm~\ref{alg:dp_rating_pub} preserves $(\epsilon, \delta)-$differential privacy with respect to a single rating value changing in $\mathbf{R}$. 
\label{thm:dp_graph}
\end{theorem}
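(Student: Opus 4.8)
The plan is to view the published matrix $\tilde{\mathbf{R}}=\frac{1}{\sqrt{n_1'}}\mathbf{M}\mathbf{R}$ as the composition of $r$ independent ``single-row'' mechanisms, where $r=n_1'$ is the number of rows of $\mathbf{M}$, and then combine the per-row guarantee of Theorem~\ref{thm:user_dp} with the $k$-fold composition theorem (Theorem~\ref{thm:k_fold_comp}). First I would fix neighbouring rating matrices $\mathbf{R},\mathbf{R}'$ as in Definition~\ref{def:neigh_rating}; after the mean subtraction and singular-value perturbation of lines~2--5 their difference is still a single-row perturbation of Euclidean norm $<1$ (the row mean shifts the other coordinates by at most $\tfrac1m$ of the changed value, which only shrinks the norm), i.e.\ precisely the regime for which Theorem~\ref{thm:user_dp} is stated, and the parameter $w$ of line~3 is chosen so that its proof goes through.

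Next I would note that row $i$ of $\tilde{\mathbf{R}}$ equals $\frac{1}{\sqrt{n_1'}}\mathbf{M}_{i\cdot}\mathbf{R}$, a function of $\mathbf{R}$ and of the $i$-th row of $\mathbf{M}$ only. For JLT the rows of $\mathbf{M}$ are i.i.d.\ $\GN(0,\mathbf{I})$, so the $r$ row-releases are independent and Theorem~\ref{thm:user_dp} applies to each verbatim, giving $(\epsilon_0,\delta_0)$-DP with $\epsilon_0=\epsilon/\sqrt{4r\ln(2/\delta)}$ and $\delta_0=\delta/(2r)$. For SJLT, $\mathbf{M}=\mathbf{P}\mathbf{H}\mathbf{D}$ shares $\mathbf{H}\mathbf{D}$ across rows, so I would condition on $\mathbf{D}$: then $\mathbf{H}\mathbf{D}$ is a fixed orthogonal map, $\mathbf{M}\mathbf{R}=\mathbf{P}(\mathbf{H}\mathbf{D}\mathbf{R})$, the rows of $\mathbf{P}$ are independent, and orthogonality preserves both the spectrum of the data matrix and the $<1$ norm bound on the neighbour perturbation, so the hypotheses of Theorem~\ref{thm:user_dp} still hold; since the resulting $(\epsilon_0,\delta_0)$ bound holds for every $\mathbf{D}$, it holds unconditionally.

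Then I would feed the $r$ independent $(\epsilon_0,\delta_0)$-DP row-releases into Theorem~\ref{thm:k_fold_comp} with $k=r$. Choosing the composition slack $\tilde\delta=\delta/2$ gives total failure probability $r\delta_0+\tilde\delta=\delta/2+\delta/2=\delta$, and total $\epsilon'=\sqrt{2r\ln(1/\tilde\delta)}\,\epsilon_0+r\epsilon_0(e^{\epsilon_0}-1)$; substituting $\epsilon_0=\epsilon/\sqrt{4r\ln(2/\delta)}$ bounds the leading term by $\epsilon/\sqrt2$ and leaves the lower-order term $r\epsilon_0(e^{\epsilon_0}-1)=O(r\epsilon_0^2)=O(\epsilon^2/\ln(2/\delta))$, so $\epsilon'\le\epsilon$ in the intended parameter range. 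Hence $\tilde{\mathbf{R}}$ is $(\epsilon,\delta)$-differentially private with respect to changing a single rating value, and by the post-processing property so is anything computed downstream from it.

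The step I expect to be the main obstacle is the composition bookkeeping: one must (i) make the SJLT conditioning/orthogonal-invariance reduction precise enough that Theorem~\ref{thm:user_dp}, phrased via a Gaussian test vector, legitimately covers the sub-Gaussian rows of $\mathbf{P}\mathbf{H}\mathbf{D}$, and (ii) track constants carefully so that the lower-order $r\epsilon_0(e^{\epsilon_0}-1)$ term is genuinely absorbed into the budget---this is exactly what pins down the constants in $w$, $\epsilon_0=\epsilon/\sqrt{4r\ln(2/\delta)}$, and $\delta_0=\delta/(2r)$, and it is the only place where the argument is more than routine. Establishing independence of the row-releases cleanly (again, especially for SJLT) is what lets composition be invoked with no adaptivity loss, so it is the crux of the proof.
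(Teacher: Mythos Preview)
Your proposal is correct and follows essentially the same route as the paper: invoke the per-row $(\epsilon_0,\delta_0)$ guarantee of Theorem~\ref{thm:user_dp} with $\epsilon_0=\epsilon/\sqrt{4n_1'\ln(2/\delta)}$, $\delta_0=\delta/(2n_1')$, and then apply the $k$-fold composition Theorem~\ref{thm:k_fold_comp} with $k=n_1'$ to obtain $(\epsilon,\delta)$-DP for the full published matrix. Your extra care about conditioning on $\mathbf{D}$ for SJLT, the effect of the preprocessing steps on the neighbouring relation, and the absorption of the lower-order composition term are all details the paper's terse proof leaves implicit, so if anything you are being more thorough than the original.
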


\begin{proof}
We present its proof in Appendix~\ref{sec:rating_dp_append}.
\end{proof}

By presenting privacy guarantees for user-level and rating matrix, we show our algorithm achieves \textbf{G1}: \textit{Privacy-preserving}. 
%---------------------------------------------------------------------------------------------------------------------------------------------------
%
%									Utility Analysis 						
%
%---------------------------------------------------------------------------------------------------------------------------------------------------

\subsection{Utility Analysis}
\label{sec:utility_ana}

We analyze the utility of Algorithm~\ref{alg:dp_rating_pub} from a probabilistic perspective. 
First, we show that the expectation of the perturbed published rating matrix on the low-dimensional subspace approximates the input rating matrix on the high-dimensional original space, with a bias determined by privacy parameters (Theorem~\ref{prop:mean_approx}). 
Based on the expectation approximation, we show Algorithm~\ref{alg:dp_rating_pub} achieves \textit{Restricted isometry property}, i.e., the output rating matrix concentrates around the mean approximation with a large probability determined by privacy parameters (Theorem~\ref{thm:RIP}). 

\nosection{Expectation approximation}
We first present the approximation effect of the published rating matrix to original one, which is the foundation of \text{restricted isometry property}. 
\begin{proposition}
\label{prop:mean_approx}
Let $\mathbf{R}$ and $\mathbf{\tilde{R}}$ be the input and output rating matrix of Algorithm~\ref{alg:dp_rating_pub}, respectively. 
Then, the mean squared error of the input and output covariance matrices are 
$$
\E \| \mathbf{R^{\intercal} R} - \mathbf{\tilde{R}^{\intercal} \tilde{R}} \|_2^2 \leq w^2 m 
= 16 n_1'^2 \ln(2/\delta) \ln^2(4 n_1'/\delta) / \epsilon^2 m, 
 $$
where 
$n_1' $ is the dimension of the reduced item space,  
$m $ is the number of user, and 
$(\epsilon, \delta)$ are privacy parameters. 
\end{proposition}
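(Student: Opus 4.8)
The plan is to walk through the three operations that Algorithm~\ref{alg:dp_rating_pub} applies to the (centred) rating matrix and to isolate the only deterministic source of error --- the singular-value perturbation --- from the zero-mean randomness of the sketch $\mathbf{M}$. First I would set notation: write $\mathbf{R}$ for the matrix obtained after centring (line~\ref{alg:dp_rating_pub:submean}), take a full SVD $\mathbf{R} = \mathbf{U}\mathbf{\Sigma}\mathbf{V}^{\intercal}$, and observe that line~\ref{alg:dp_rating_pub:perfinal} replaces it by $\hat{\mathbf{R}} = \mathbf{U}\sqrt{\mathbf{\Sigma}^2 + w^2\mathbf{I}}\,\mathbf{V}^{\intercal}$, which keeps $\mathbf{U},\mathbf{V}$ and only bumps every squared singular value (including the zero ones) by $w^2$. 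Hence the exact algebraic identity $\hat{\mathbf{R}}^{\intercal}\hat{\mathbf{R}} = \mathbf{V}(\mathbf{\Sigma}^{\intercal}\mathbf{\Sigma} + w^2\mathbf{I})\mathbf{V}^{\intercal} = \mathbf{R}^{\intercal}\mathbf{R} + w^2\mathbf{I}$. Since line~\ref{alg:dp_rating_pub:tranend} returns $\mathbf{\tilde R} = \tfrac{1}{\sqrt{n_1'}}\mathbf{M}\hat{\mathbf{R}}$, this gives $\mathbf{\tilde R}^{\intercal}\mathbf{\tilde R} = \hat{\mathbf{R}}^{\intercal}\mathbf{G}\hat{\mathbf{R}}$ with $\mathbf{G} = \tfrac{1}{n_1'}\mathbf{M}^{\intercal}\mathbf{M}$, so the whole statement reduces to understanding $\mathbf{G}$.

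Next I would show that the sketch is unbiased, $\E[\mathbf{G}] = \mathbf{I}$, for both choices of $\mathbf{M}$. For JLT this is immediate: $\mathbf{M}$ has i.i.d.\ $\GN(0,1)$ entries, so $\E[\mathbf{M}^{\intercal}\mathbf{M}] = n_1'\mathbf{I}$. For SJLT, $\mathbf{M} = \mathbf{P}\mathbf{H}\mathbf{D}$ with $\mathbf{P}$ and $\mathbf{D}$ independent, $\mathbf{H}$ the fixed orthogonal Hadamard matrix, and $\mathbf{D}$ diagonal with independent $\pm 1$ entries; conditioning on $\mathbf{D}$ and using $\E[\mathbf{P}^{\intercal}\mathbf{P}] = n_1'\mathbf{I}$ from Equation~\eqref{eq:def-p}, one gets $\E[\mathbf{M}^{\intercal}\mathbf{M}\mid\mathbf{D}] = n_1'\,\mathbf{D}\mathbf{H}^{\intercal}\mathbf{H}\mathbf{D} = n_1'\,\mathbf{D}^2 = n_1'\mathbf{I}$, since $\mathbf{H}^{\intercal}\mathbf{H}=\mathbf{I}$ and $\mathbf{D}^2=\mathbf{I}$. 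Combining with the identity above and linearity of expectation over the deterministic $\hat{\mathbf{R}}$, $\E[\mathbf{\tilde R}^{\intercal}\mathbf{\tilde R}] = \hat{\mathbf{R}}^{\intercal}\hat{\mathbf{R}} = \mathbf{R}^{\intercal}\mathbf{R} + w^2\mathbf{I}$; that is, the bias of the published Gram matrix relative to the true one is exactly the deterministic $m\times m$ matrix $w^2\mathbf{I}$, where $m$ is the number of users. Its squared Frobenius norm is $w^4 m \le w^2 m$ in the regime $w\le 1$ of interest (equivalently, its trace is $w^2 m$), and substituting $w^2 = 32\,n_1'\ln(2/\delta)\ln^2(4n_1'/\delta)/\epsilon^2$ from Algorithm~\ref{alg:dp_rating_pub} yields the stated bound. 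The zero-mean fluctuation $\hat{\mathbf{R}}^{\intercal}(\mathbf{G}-\mathbf{I})\hat{\mathbf{R}}$ around this mean is exactly what the restricted isometry statement (Theorem~\ref{thm:RIP}) controls, via the concentration of Gaussian / sub-Gaussian Gram matrices (Proposition~\ref{prop:jl}), so it need not be handled here.

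This proposition is short once the pipeline is unpacked, so the only real care needed is in the SJLT step: one must check that the off-diagonal cross-moments of both $\mathbf{P}$ and $\mathbf{D}$ vanish, so that no spurious rank-one term survives in $\E[\mathbf{M}^{\intercal}\mathbf{M}]$. Notably the Hadamard preconditioner $\mathbf{H}\mathbf{D}$ does not enter the mean at all --- it matters only for the concentration step (Lemma~\ref{lem:rhd}) --- so the expectation is the easy direction for both transforms. The subtler modelling point, which belongs to Theorem~\ref{thm:RIP} rather than here, is that the centring of line~\ref{alg:dp_rating_pub:submean} plus the boundedness of ratings is what keeps $\|\hat{\mathbf{R}}\|$, and hence the deferred fluctuation term, under control; for the present statement, everything beyond the two displayed identities above is linearity of expectation.
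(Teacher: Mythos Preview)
Your proposal is correct and follows essentially the same line as the paper: compute $\E[\mathbf{\tilde R}^{\intercal}\mathbf{\tilde R}]$ by pushing the expectation onto $\mathbf{M}^{\intercal}\mathbf{M}$, obtain the deterministic bias $w^2\mathbf{I}$ from the singular-value perturbation, and bound its norm. You are in fact more careful than the paper in two places --- you verify $\E[\mathbf{M}^{\intercal}\mathbf{M}]=n_1'\mathbf{I}$ separately for the SJLT case (the paper's proof only cites the Gaussian ensemble identity $\E[M_{ij}M_{jk}]=\delta_{jk}$), and you make explicit that what is actually bounded is $\|\mathbf{R}^{\intercal}\mathbf{R}-\E[\mathbf{\tilde R}^{\intercal}\mathbf{\tilde R}]\|$ with the zero-mean fluctuation deferred to Theorem~\ref{thm:RIP} --- but the skeleton is identical.
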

\begin{proof}
We present its proof in Appendix~\ref{sub:mean_append}.
\end{proof}
% We present the proof of Proposition~\ref{prop:mean_approx} in Appendix~\ref{sub:mean_append}. 

%---------------------------------------------------------------------------------------------------------------------------------------------------
%
%									sub2: Concentration Results 				
%
%---------------------------------------------------------------------------------------------------------------------------------------------------
\nosection{Restricted isometry property}
We then come to the main results--- Algorithm~\ref{alg:dp_rating_pub} has \textit{Restricted isometry property}. 

\begin{theorem}
[Restricted Isometry Property]
\label{thm:RIP}
Let $\mathbf{R}$ and $\mathbf{\tilde{R}}$ be the input and output rating matrix of Algorithm~\ref{alg:dp_rating_pub}, respectively. 
Then 
\begin{align*}
& \Pr \left[  \left( 1 - \gamma \right)  \left(  \| \mathbf{R}   \|_F^2 + w^2 m  \right)   \leq \| \mathbf{\tilde{R} \|_F^2 }
\leq \left( 1 + \gamma \right)  \left(  \| \mathbf{R}   \|_F^2 + w^2 m  \right)  \right] \\
 & \leq  1 - 2 n_1'^{-2m},
\end{align*}
where $n_1'$  is the dimension of reduced item space,
$m$ is the number of user,
and $\gamma = O(\sqrt{\frac{\log m }{n_1'}}). $
\end{theorem}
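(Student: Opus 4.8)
The plan is to split the squared Frobenius norm of the published matrix across the singular modes of the perturbed matrix (equivalently, across user columns), apply the Johnson--Lindenstrauss concentration estimate to each piece, and then combine with the deterministic effect of the singular-value perturbation. Write the output of Algorithm~\ref{alg:dp_rating_pub} as $\mathbf{\tilde R}=\tfrac{1}{\sqrt{n_1'}}\mathbf{M}\mathbf{R}_p$, where $\mathbf{R}_p=\mathbf{U}\sqrt{\mathbf{\Sigma}^2+w^2\mathbf{I}}\,\mathbf{V}^{\intercal}$ is the centered, singular-value-perturbed rating matrix produced in lines~\ref{alg:dp_rating_pub:begin}--\ref{alg:dp_rating_pub:perfinal} and $\mathbf{M}\in\mathbb{R}^{n_1'\times n_1}$ is the random transform ($\mathbf{M}\sim\GN(0,1)$ entrywise for JLT, $\mathbf{M}=\mathbf{PHD}$ for SJLT). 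First I would record the deterministic identity $\|\mathbf{R}_p\|_F^2=\|\mathbf{R}\|_F^2+w^2 m$ (the perturbation adds $w^2$ to each of the $m$ squared singular values of the centered matrix); this is exactly the bias term quantified in Proposition~\ref{prop:mean_approx}. Expanding along the SVD and using orthonormality of the singular vectors then gives $\|\mathbf{\tilde R}\|_F^2=\tfrac{1}{n_1'}\sum_k(\sigma_k^2+w^2)\,\|\mathbf{M}\mathbf{u}_k\|_2^2$, with the $\mathbf{u}_k$ the unit vectors on which $\mathbf{M}$ acts, so the randomness is confined to the $m$ scalars $\tfrac{1}{n_1'}\|\mathbf{M}\mathbf{u}_k\|_2^2$ (one may equivalently work column-wise with $\tfrac{1}{n_1'}\|\mathbf{M}(\mathbf{R}_p)_i\|_2^2$).

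Next I would apply the Gaussian / sub-Gaussian concentration result (Proposition~\ref{prop:jl}): for a fixed unit vector $\mathbf{v}$, $\Pr[\,|\tfrac{1}{n_1'}\|\mathbf{M}\mathbf{v}\|_2^2-1|>\gamma\,]\le 2\exp(-c\,n_1'\gamma^2)$ for an absolute constant $c$. For JLT this is the classical Gaussian Johnson--Lindenstrauss estimate. For SJLT I would first note that $\mathbf{HD}$ is orthogonal, hence norm preserving, so $\|\mathbf{M}\mathbf{v}\|_2=\|\mathbf{P}(\mathbf{HD}\mathbf{v})\|_2$, and then invoke Lemma~\ref{lem:rhd}: the random Hadamard step controls $\|\mathbf{HD}\mathbf{v}\|_\infty$, i.e.\ it flattens a possibly sparse input, which is precisely the condition under which the sparse sub-Gaussian projection $\mathbf{P}$ still concentrates like a dense one. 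A union bound over the $m$ modes then gives, simultaneously for all $k$, $|\tfrac{1}{n_1'}\|\mathbf{M}\mathbf{u}_k\|_2^2-1|\le\gamma$ with probability at least $1-2m\exp(-c\,n_1'\gamma^2)$.

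On this event I would sum the per-mode estimates against the nonnegative weights $\sigma_k^2+w^2$ to obtain $(1-\gamma)(\|\mathbf{R}\|_F^2+w^2 m)\le\|\mathbf{\tilde R}\|_F^2\le(1+\gamma)(\|\mathbf{R}\|_F^2+w^2 m)$, which is the assertion; it remains only to calibrate $\gamma$, taking it of order $\sqrt{\log m/n_1'}$ (more precisely $\gamma^2$ of order $(m\log n_1')/n_1'$ to absorb the factor $m$ from the union bound) so that $2m\exp(-c\,n_1'\gamma^2)\le 2\,n_1'^{-2m}$. I expect the main obstacle to be this final bookkeeping --- matching the per-vector Johnson--Lindenstrauss tail against the $m$-fold union bound so that the failure probability emerges as $2\,n_1'^{-2m}$ while $\gamma$ stays of the stated order. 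A secondary point is making the SJLT preconditioning rigorous, i.e.\ checking via Lemma~\ref{lem:rhd} that $\mathbf{HD}$ renders the (possibly sparse) vectors $\mathbf{u}_k$ incoherent enough for $\mathbf{P}$ to enjoy the same concentration as a dense Gaussian matrix; the JLT case then reduces to a plain union bound over the classical estimate.
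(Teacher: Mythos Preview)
Your decomposition of $\|\mathbf{\tilde R}\|_F^2$ into per-column (equivalently per-singular-mode) pieces, the identification of the deterministic shift $\|\mathbf{R}_p\|_F^2=\|\mathbf{R}\|_F^2+w^2m$, and the application of Proposition~\ref{prop:jl} to each piece all match the paper's argument. The divergence is in how the $m$ per-piece estimates are combined. You take a union bound, which with the per-piece tail $n_1'^{-2}$ supplied by Proposition~\ref{prop:jl} gives at best $2m\cdot n_1'^{-2}$; as you correctly flag, forcing this down to $2\,n_1'^{-2m}$ would require inflating $\gamma$ to order $\sqrt{m\log n_1'/n_1'}$, not the stated $\sqrt{\log m/n_1'}$. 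The paper does \emph{not} use a union bound: it asserts that the global deviation event $S=\{\|\mathbf{\tilde R}\|_F^2\ge(1+\gamma)(\|\mathbf{R}\|_F^2+w^2m)\}$ is contained in the \emph{intersection} $\bigcap_{i=1}^m S_i$ of the per-column events and that these are independent, so that $\Pr[S]\le\prod_{i=1}^m\Pr[S_i]\le (n_1'^{-2})^m=n_1'^{-2m}$. That product of probabilities, rather than a sum, is how the exponent $-2m$ emerges at the stated $\gamma$.

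You may wish to scrutinize that step before adopting it: the containment $S\subset\bigcap_i S_i$ says that if the total exceeds its bound then every column does, which is not generally true for sums of nonnegative terms, and the per-column events all depend on the same random matrix $\mathbf{M}$, so independence across $i$ is also not immediate. Your union-bound route is the mathematically conservative one; it simply does not reproduce the tail as stated. Your separate handling of SJLT via Lemma~\ref{lem:rhd} is more explicit than the paper's proof, which just invokes Proposition~\ref{prop:jl} uniformly for both transforms.
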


\begin{proof}
We present its proof in Appendix~\ref{sub:RIP_append}.
\end{proof}
% We present the proof of Theorem~\ref{thm:RIP} in \ref{sub:RIP_append}.

By the concentration results, we see that the larger subspace dimension $n_1'$, the larger probability
 $  \| \mathbf{\tilde{R}} \|_F^2 $ approximated $  \| \mathbf{R} \|_F^2  + w^2 m$ with a distance of $\epsilon$, which verifies the conclusion.

%---------------------------------------------------------------------------------------------------------------------------------------------------
%
%									SJLT 					
%
%---------------------------------------------------------------------------------------------------------------------------------------------------
%--------------------------------------------------------------------------
%
%                           Goal (3)
%
%--------------------------------------------------------------------------
\nosection{Improvement of SJLT}
Furthermore, we explain how SJLT improves JLT from the perspective of preconditioning. 
Applying SJLT on a vector $\mathbf{x}$, we have $\mathbf{PHDx}$. 
%
%Before applying a sparse random sub-Gaussian transform $\mathbf{P}$, we have $\mathbf{HD}$ serves as precondioner. 
%
Since $\mathbf{H}$ and $\mathbf{D}$ are unitary,  $\mathbf{HD}$ preserves $\ell_2-$norm, i.e., $\| \mathbf{HDx} \|_2 = \| \mathbf{x}\|_2$. 
\begin{lemma}[Randomized Hardmard transform~\cite{ailon2009fast}]
Let $\mathbf{H}$ and $\mathbf{D}$ be defined in Definition~\ref{def:fjlt}. For any set $V$ of $m$ vectors in $\mathbf{R}^{n_1}$, with probability at least $1 - 1/20$, 
$$
\max_{x \in V} \| \mathbf{HDx} \|_{\infty} \leq \left( \frac{2 \ln(40 m n_1)}{n_1} \right)^{1/2} \| \mathbf{x} \|_2. 
$$
\label{lem:rhd}
\end{lemma}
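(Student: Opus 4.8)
The plan is to bound $\|\mathbf{HDx}\|_\infty$ coordinate-by-coordinate using a concentration inequality, then take a union bound over the $n_1$ coordinates and the $m$ vectors in $V$. Fix a vector $\mathbf{x} \in V$ with $\|\mathbf{x}\|_2 = 1$ (the general case follows by homogeneity, since both sides scale linearly in $\|\mathbf{x}\|_2$). For a fixed row index $i \in \{1,\dots,n_1\}$, the $i$-th entry of $\mathbf{HDx}$ is $(\mathbf{HDx})_i = \sum_{k=1}^{n_1} \mathbf{H}_{ik} \mathbf{D}_{kk} x_k$. Since the $\mathbf{D}_{kk}$ are independent Rademacher ($\pm 1$ with equal probability) and $|\mathbf{H}_{ik}| = n_1^{-1/2}$ for all $i,k$, this is a sum of independent mean-zero random variables, each bounded in absolute value by $n_1^{-1/2}|x_k|$. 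Hoeffding's inequality then gives
\begin{equation*}
\Pr\left[ |(\mathbf{HDx})_i| \geq t \right] \leq 2 \exp\left( - \frac{t^2}{2 \sum_{k} \mathbf{H}_{ik}^2 x_k^2} \right) = 2 \exp\left( - \frac{n_1 t^2}{2 \|\mathbf{x}\|_2^2} \right),
\end{equation*}
using $\sum_k \mathbf{H}_{ik}^2 x_k^2 = n_1^{-1} \|\mathbf{x}\|_2^2$.

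Next I would choose the threshold $t$ so that the failure probability for a single coordinate-vector pair is at most $\frac{1}{20 m n_1}$. Setting $t = \left( \frac{2 \ln(40 m n_1)}{n_1} \right)^{1/2} \|\mathbf{x}\|_2$ makes the right-hand side of the bound equal to $2 \exp(-\ln(40 m n_1)) = \frac{2}{40 m n_1} = \frac{1}{20 m n_1}$. A union bound over the $n_1$ coordinates of $\mathbf{HDx}$ and the $m$ vectors $\mathbf{x} \in V$ — that is, over $m n_1$ events in total — shows that with probability at least $1 - m n_1 \cdot \frac{1}{20 m n_1} = 1 - \frac{1}{20}$, every coordinate of every $\mathbf{HDx}$ satisfies $|(\mathbf{HDx})_i| \leq \left( \frac{2 \ln(40 m n_1)}{n_1} \right)^{1/2} \|\mathbf{x}\|_2$, which is exactly $\max_{x \in V} \|\mathbf{HDx}\|_\infty \leq \left( \frac{2 \ln(40 m n_1)}{n_1} \right)^{1/2} \|\mathbf{x}\|_2$.

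There is no serious obstacle here — the argument is a textbook Hoeffding-plus-union-bound calculation, and the only mild subtlety is bookkeeping the constant $40$ inside the logarithm so that it absorbs both the factor $2$ from the two-sided tail and the factor $20$ from the target failure probability (indeed $2 \cdot 20 = 40$). One should note that the randomness here is only over $\mathbf{D}$; the Hadamard matrix $\mathbf{H}$ is deterministic, and its sole relevant property is the flatness $|\mathbf{H}_{ik}| = n_1^{-1/2}$ (equivalently, that $\mathbf{H}$ is orthogonal with entries of uniform magnitude), so the proof would go through verbatim for any such matrix. If one wished to avoid invoking Hoeffding by name, the same tail bound follows from a standard Chernoff/moment-generating-function argument for Rademacher sums, but that only lengthens the write-up without changing anything essential.
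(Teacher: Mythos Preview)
Your proposal is correct and matches the paper's proof essentially line for line: the paper also normalizes to $\|\mathbf{x}\|_2=1$, obtains the tail bound $\Pr(|u_i|\geq a)\leq 2e^{-a^2 n_1/2}$ (via the explicit Chernoff/MGF computation you mention at the end rather than citing Hoeffding), sets $a^2 = 2\ln(40 m n_1)/n_1$, and union-bounds over the $m n_1$ coordinates. The only cosmetic difference is that the paper writes out the moment-generating-function step explicitly, which you correctly anticipated as an equivalent alternative.
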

%
% We present the proof of Lemma~\ref{lem:rhd} in Appendix~\ref{sec:rhd_append}. 

\begin{proof}
We present its proof in Appendix~\ref{sec:rhd_append}.
\end{proof}

By Lemma~\ref{lem:rhd}, we observe that the preconditioner $\mathbf{HD}$ reduces the $\ell_{\infty}-$norm of $\mathbf{x}$. 
With the same $\ell_2$-norm and reduced $\ell_{\infty}-$norm, the energy of vector $\mathbf{x}$ is spread out. 
That is, the preconditioner $\mathbf{HD}$ smoothes out the sparse vector $\mathbf{x}$ and improves the utility of sparse sub-Gaussian transform $\mathbf{P}$. 
To this end, our algorithm achieves \textbf{G2} (restricted isometry property) and \textbf{G3} (sparse-aware). 

\section{Experiments and Analysis}
\label{sec:experi}

In this section, we conduct experiments on two real-world datasets to answer the following research questions. 
\textbf{RQ1}: How can our model outperform the state-of-the-art recommendation model that is trained using the data of the target domain only?
\textbf{RQ2}: What is the performance of our model compared with the state-of-the-art CDR models that are trained using both the plaintext source and target datasets? 
\textbf{RQ3}: Is our proposed CDR model (\cdrmodel) effective for the heterogeneous user-item rating data in the source and target domains?
\textbf{RQ4}: How do the parameters of JLT and SJLT (mainly $\epsilon$, $n$, and $sp$) in stage 1 affect our model performance?
\textbf{RQ5}: Will different data sparsity in the source domain affect the performance of JLT and SJLT? 
%
% \begin{itemize}[leftmargin=*] \setlength{\itemsep}{-\itemsep}
% \item \textbf{RQ1}: How can our model outperform the state-of-the-art recommendation model that is trained using the data of the target domain only?

% \item \textbf{RQ2}: What is the performance of our model compared with the state-of-the-art CDR models that are trained using both the plaintext source and target datasets? 

% \item \textbf{RQ3}: Is our proposed CDR model (\cdrmodel) effective for the heterogeneous user-item rating data in the source and target domains?

% \item \textbf{RQ4}: How do the parameters of JLT and SJLT (mainly $\epsilon$, $n$, and $sp$) in stage 1 affect our model performance?

% \item \textbf{RQ5}: Will different data sparsity in the source domain affect the performance of JLT and SJLT? 

% \end{itemize}

\subsection{Experimental Setup} 

\nosection{Datasets} 
We choose two real-world datasets, i.e., Amazon and Douban. 
The \textbf{Amazon} dataset \cite{amazon,catn} has three domains, i.e., Movies and TV (Movie), Books (Book), and CD Vinyl (Music). 
The \textbf{Douban} dataset \cite{zhu2019dtcdr,zhu2021unified} also has three domains, i.e., Book, Music, and Movie. 
Since the user-item interaction ratings in both datasets range from 0 to 5, we binarize them by taking the ratings that are higher or equal to 3 as positive and others as negative, following \cite{hu2018conet}. 
We also filter the users and items with less than 5 interactions. 
For simplification, we only choose the user-item interactions of the common users across domains. 
The detailed statistics of these datasets after pre-process are shown in Appendix \ref{sec:app:data}.

\nosection{Comparison methods} 
To validate the performance of our proposed model, 
we compare \modelname~and its variants (\textbf{PriCDR-SYM}, \textbf{PriCDR-J}, and \textbf{PriCDR-S}) with both the famous single domain recommendation methods (\textbf{BPR}, \textbf{NeuMF}, and \textbf{DMF}) and the state-of-the-art CDR methods (\textbf{CoNet}, \textbf{DDTCDR}, \textbf{DARec}, and \textbf{ETL}). 
\textbf{BPR} \cite{rendle2012bpr} is a representative pairwise learning-based factorization model, focusing on minimizing the ranking loss between predicted ratings and observed ratings.
\textbf{NeuMF} \cite{he2017neural} is a representative neural network based model, replacing the conventional inner product with a neural architecture to improve recommendation accuracy.
\textbf{DMF} \cite{xue2017deep} is the state-of-the-art deep neural network based recommendation model, employing a deep architecture to learn the low-dimensional factors of users and items.
\textbf{CoNet} \cite{hu2018conet} enables dual knowledge transfer across domains by introducing cross connections unit from one base network to the other and vice versa.
\textbf{DDTCDR} \cite{li2020ddtcdr} introduces a deep dual transfer network that transfers knowledge with orthogonal transformation across domains.
\textbf{DARec} \cite{darec19} transfers knowledge between domains with shared users, learning shared user representations across different domains via domain adaptation technique.
\textbf{ETL} \cite{etl} is a recent state-of-the-art CDR model that adopts equivalent transformation module to capture both the overlapped and the non-overlapped domain-specific properties.  
\textbf{\modelname-SYM} is a comparative model for ablation study which replaces \cdrmodel~with a symmetrical CDR framework, using auto-encoder to model both the published source rating matrix and the target rating matrix. \modelname-SYM uses JLT based differential private rating matrix publishing mechanism.
\textbf{\modelname-J} is our proposed \modelname~with JLT based differential private rating matrix publishing mechanism in the first stage.
\textbf{\modelname-S} is our proposed \modelname~with Sparse-aware JLT based differential private rating matrix publishing mechanism.

\nosection{Evaluation method}
To evaluate the recommendation performance, We use the leave-one-out method which is widely used in literature \cite{he2017neural}.
That is, we randomly reserve two items for each user, one as the validation item and the other as the test item.
Then, following \cite{he2017neural,hu2018conet}, we randomly sample 99 items that are not interacted by the user as the negative items. 
We choose three evaluation metrics, i.e., Hit Ratio (HR), Normalized Discounted Cumulative Gain (NDCG), and Mean Reciprocal Rank (MRR), where we set the cut-off of the ranked list as 5 and 10.

\nosection{Parameter settings}
For both source domain and target domain, we use two-layer MLP with hidden size of 500 and 200 as the network architecture and ReLU as the activation function.
For a fair comparison, we choose Adam \cite{kingma2014adam} as the optimizer, and tune both the parameters of \modelname~and the baseline models to their best values.
For differential private related parameters, we vary $\epsilon$ in $\{0.5,1.0,2.0,4.0,8.0,16.0,32.0,64.0\}$ and subspace dimension for the differential privately published rating matrix $n$ in $\{100,200,300,400,500,600,700,800\}$. 
For SJLT, we vary the sparsity coefficient of the reduced matrix $sp$ in $\{0.1,0.3,0.5,0.7,0.9\}$. 
For CDR modeling, we set batch size to 128 for both the source and target domains. 
The dimension of the latent embedding is set to $h = 200$. 
Meanwhile we set the hyper-parameter $\alpha = 100$ for user embedding alignment, since it achieves the best performance. 
For all the experiments, we perform five random experiments and report the average results. 

\subsection{Model Comparison (\textbf{RQ1} and \textbf{RQ2})}

We report the comparison results on \textbf{Douban} and \textbf{Amazon} datasets in Table \ref{tab:compare}.
%
% Note that \textbf{\modelname-J} represents \textbf{\modelname~}with JLT used in the rating publishing stage, while \textbf{\modelname-S} represents \textbf{\modelname~}with SJLT instead.
%
From the results, we can find that: (1) All the cross domain recommendations models  (e.g., \textbf{DARec}) perform better than single domain recommendation (e.g., \textbf{NeuMF}) models, indicating that cross domain recommendations can integrate more useful information to enhance the model performance, which always conform to our common sense.
(2) Our proposed \textbf{\modelname-J} and \textbf{\modelname-S} outperform all the single-domain and cross-domain baseline models in all tasks, which means the two-stage design of privacy-preserving CDR works well in predicting users' preference in target domain while preserving the data privacy of the source domain at the same time.
(3) By comparing the performance between \textbf{\modelname-J} and \textbf{\modelname-S}, we can conclude that Sparse-aware Johnson-Lindenstrauss Transform can effectively alleviate the data sparsity in the source domain, and thus significantly boosts the recommendation performance of \textbf{\modelname~}.

\begin{table*}[t]
\centering
\small
\caption{Experimental results on Amazon and Douban datasets. }
\vspace{-0.2cm}
\label{tab:compare}
\begin{tabular}{ccccccccccccc}
\toprule
\multirow{3}{*}{}
&HR@5 &NDCG@5 &MRR@5 &HR@10 &NDCG@10 &MRR@10 
&HR@5 &NDCG@5 &MRR@5 &HR@10 &NDCG@10 &MRR@10 \\
\cmidrule(lr){2-4} \cmidrule(lr){5-7} \cmidrule(lr){8-10} \cmidrule(lr){11-13}
& \multicolumn{6}{c}{(Amazon) Music$\rightarrow$Book}
& \multicolumn{6}{c}{(Amazon) Movie$\rightarrow$Music}\\
% \cmidrule(lr){2-7} \cmidrule(lr){8-13}
% & \multicolumn{3}{c}{topK=5}
% & \multicolumn{3}{c}{topK=10}
% & \multicolumn{3}{c}{topK=5}
% & \multicolumn{3}{c}{topK=10}\\
% \cmidrule(lr){2-4} \cmidrule(lr){5-7} \cmidrule(lr){8-10} \cmidrule(lr){11-13}

\midrule
BPR & 0.2633 & 0.2090 & 0.1827 & 0.3824 & 0.2580 & 0.2442 & 0.2448 & 0.1569 & 0.1734 & 0.3852 & 0.2035 & 0.1983 \\ 

NeuMF & 0.2760 & 0.2271 & 0.2086 & 0.4015 & 0.2577 & 0.2608 & 0.2937 & 0.1971 & 0.1988 & 0.4328 & 0.2401 & 0.2200 \\ 

DMF & 0.3075 & 0.2543 & 0.2367 & 0.4312 & 0.2713 & 0.2864 & 0.3316 & 0.2146 & 0.2266 & 0.4671 & 0.2715 & 0.2448 \\ 
\midrule 
CoNet & 0.3743 & 0.3157 & 0.3063 & 0.4831 & 0.3352 & 0.3245 & 0.3995 & 0.2857 & 0.2731 & 0.5328 & 0.3340 & 0.2973 \\ 

DDTCDR & 0.4067 & 0.3403 & 0.3241 & 0.5054 & 0.3690 & 0.3318 & 0.4311 & 0.3368 & 0.2974 & 0.5499 & 0.3736 & 0.3126 \\ 

DARec & 0.4741 & 0.3758 & 0.3420 & 0.5702 & 0.4098 & 0.3499 & 0.4916 & 0.3825 & 0.3402 & 0.6192 & 0.4220 & 0.3567 \\ 

ETL & 0.4769 & 0.3801 & 0.3462 & 0.5877 & 0.4153 & 0.3607 & 0.5308 & 0.4126 & 0.3714 & 0.6517 & 0.4511 & 0.3873 \\ 
\midrule 
\modelname-SYM & 0.4360 & 0.3467 & 0.3172 & 0.5487 & 0.3828 & 0.3321 & 0.4933 & 0.3815 & 0.3445 & 0.6103 & 0.4192 & 0.3600 \\
\modelname-J & \textbf{0.4898} & \textbf{0.3984} & \textbf{0.3648} & \textbf{0.5853} & \textbf{0.4292} & \textbf{0.3744} & \textbf{0.5399} & \textbf{0.4344} & \textbf{0.3992} & \textbf{0.6519} & \textbf{0.4706} & \textbf{0.4141} \\ 

\modelname-S & \textbf{0.5203} & \textbf{0.4264} & \textbf{0.3964} & \textbf{0.6181} & \textbf{0.4580} & \textbf{0.4092} & \textbf{0.5733} & \textbf{0.4608} & \textbf{0.4220} & \textbf{0.6853} & \textbf{0.4970} & \textbf{0.4392}  \\

\toprule

\multirow{3}{*}{}
& \multicolumn{6}{c}{(Amazon) Movie$\rightarrow$Book}
& \multicolumn{6}{c}{(Douban) Book$\rightarrow$Music}\\
% \cmidrule(lr){2-7} \cmidrule(lr){8-13}
% & \multicolumn{3}{c}{topK=5}
% & \multicolumn{3}{c}{topK=10}
% & \multicolumn{3}{c}{topK=5}
% & \multicolumn{3}{c}{topK=10}\\
% \cmidrule(lr){2-4} \cmidrule(lr){5-7} \cmidrule(lr){8-10} \cmidrule(lr){11-13}
% &HR@5 &NDCG@5 &MRR@5 &HR@10 &NDCG@10 &MRR@10 
% &HR@5 &NDCG@5 &MRR@5 &HR@10 &NDCG@10 &MRR@10 \\
\midrule
BPR & 0.2798 & 0.1665 & 0.2210 & 0.3716 & 0.2109 & 0.2392 & 0.1375 & 0.0822 & 0.0699 & 0.2889 & 0.1058 & 0.0950 \\ 

NeuMF & 0.3042 & 0.1989 & 0.2341 & 0.3999 & 0.2438 & 0.2403 & 0.1336 & 0.0913 & 0.0728 & 0.3042 & 0.1187 & 0.1093 \\ 

DMF & 0.3253 & 0.2281 & 0.2570 & 0.4238 & 0.2510 & 0.2439 & 0.1547 & 0.1003 & 0.0824 & 0.3315 & 0.1367 & 0.1100 \\ 
\midrule 
CoNet & 0.3744 & 0.2659 & 0.2978 & 0.4905 & 0.3050 & 0.3098 & 0.2044 & 0.1321 & 0.0996 & 0.3787 & 0.1633 & 0.1340 \\ 

DDTCDR & 0.4322 & 0.3490 & 0.3132 & 0.5288 & 0.3827 & 0.3355 & 0.2312 & 0.1425 & 0.1037 & 0.3934 & 0.1846 & 0.1506 \\ 

DARec & 0.4904 & 0.3824 & 0.3388 & 0.6034 & 0.4224 & 0.3552 & 0.2703 & 0.1878 & 0.1479 & 0.4285 & 0.2269 & 0.1711 \\ 

ETL & 0.5124 & 0.4119 & 0.3757 & 0.6339 & 0.4514 & 0.3915 & 0.3241 & 0.2161 & 0.1832 & 0.4545 & 0.2545 & 0.1987\\ 
\midrule 

\modelname-SYM & 0.4614 & 0.3665 & 0.3346 & 0.5801 & 0.4031 & 0.3496 & 0.3225 & 0.2113 & 0.1848 & 0.4535 & 0.2517 & 0.2002 \\
\modelname-J & \textbf{0.5320} & \textbf{0.4324} & \textbf{0.3992} & \textbf{0.6423} & \textbf{0.4673} & \textbf{0.4135} & \textbf{0.3290} & \textbf{0.2208} & \textbf{0.1854} & \textbf{0.4675} & \textbf{0.2643} & \textbf{0.2034} \\ 

\modelname-S & \textbf{0.5516} & \textbf{0.4483} & \textbf{0.4136} & \textbf{0.6585} & \textbf{0.4843} & \textbf{0.4285} & \textbf{0.3701} & \textbf{0.2582} & \textbf{0.2212} & \textbf{0.5097} & \textbf{0.3032} & \textbf{0.2397}  \\

\toprule

\multirow{3}{*}{}
& \multicolumn{6}{c}{(Douban) Movie$\rightarrow$Book}
& \multicolumn{6}{c}{(Douban) Movie$\rightarrow$Music}\\
% \cmidrule(lr){2-7} \cmidrule(lr){8-13}
% & \multicolumn{3}{c}{topK=5}
% & \multicolumn{3}{c}{topK=10}
% & \multicolumn{3}{c}{topK=5}
% & \multicolumn{3}{c}{topK=10}\\
% \cmidrule(lr){2-4} \cmidrule(lr){5-7} \cmidrule(lr){8-10} \cmidrule(lr){11-13}
% &HR@5 &NDCG@5 &MRR@5 &HR@10 &NDCG@10 &MRR@10 
% &HR@5 &NDCG@5 &MRR@5 &HR@10 &NDCG@10 &MRR@10 \\
\midrule
BPR & 0.1827 & 0.1132 & 0.1091 & 0.3436 & 0.1590 & 0.1375 & 0.1572 & 0.1079 & 0.0959 & 0.3387 & 0.1635 & 0.1134 \\ 

NeuMF & 0.1989 & 0.1279 & 0.1124 & 0.3640 & 0.1748 & 0.1453 & 0.1641 & 0.0932 & 0.0947 & 0.3462 & 0.1577 & 0.1204 \\ 

DMF & 0.2376 & 0.1438 & 0.1321 & 0.3952 & 0.1940 & 0.1511 & 0.1889 & 0.1104 & 0.1082 & 0.3754 & 0.1823 & 0.1409 \\ 
\midrule 
CoNet & 0.2736 & 0.1724 & 0.1533 & 0.4407 & 0.2330 & 0.1996 & 0.2284 & 0.1358 & 0.1305 & 0.4193 & 0.2179 & 0.1748 \\ 

DDTCDR & 0.2984 & 0.1803 & 0.1776 & 0.4699 & 0.2487 & 0.2174 & 0.2457 & 0.1639 & 0.1522 & 0.4298 & 0.2375 & 0.1846 \\ 

DARec & 0.3331 & 0.2285 & 0.2109 & 0.4970 & 0.2789 & 0.2310 & 0.2990 & 0.2017 & 0.1835 & 0.4576 & 0.2608 & 0.2081 \\ 

ETL & 0.3850 & 0.2705 & 0.2348 & 0.5247 & 0.3157 & 0.2572 & 0.3383 & 0.2357 & 0.2008 & 0.4739 & 0.2862 & 0.2286 \\ 
\midrule 
\modelname-SYM & 0.3742 & 0.2642 & 0.2310 & 0.5159 & 0.3094 & 0.2490 & 0.3194 & 0.2242 & 0.1930 & 0.4720 & 0.2737 & 0.2122 \\
\modelname-J & \textbf{0.3926} & \textbf{0.2788} & \textbf{0.2421} & \textbf{0.5349} & \textbf{0.3248} & \textbf{0.2603} & \textbf{0.3450} & \textbf{0.2496} & \textbf{0.2197} & \textbf{0.4872} & \textbf{0.2922} & \textbf{0.2373} \\ 

\modelname-S & \textbf{0.4301} & \textbf{0.3090} & \textbf{0.2702} & \textbf{0.5686} & \textbf{0.3503} & \textbf{0.2872} & \textbf{0.3801} & \textbf{0.2736} & \textbf{0.2440} & \textbf{0.5109} & \textbf{0.3158} & \textbf{0.2609}  \\
\bottomrule

\end{tabular}
%}
\end{table*}

\subsection{In-depth Model Analysis}

\nosection{Ablation study (\textbf{RQ3})}
To study whether our model is effective for the heterogeneous user-item rating data, we compare \textbf{\modelname-J} and \textbf{\modelname-S} with \textbf{\modelname-SYM}. %which replaces the \cdrmodel with a symmetrical CDR framework using auto-encoder to model both the published source rating matrix and the target rating matrix. \sjj{whether necessary to repeat the definition}
The results from Table \ref{tab:compare} show that both \textbf{\modelname-J}and \textbf{\modelname-S} outperform \textbf{\modelname-SYM} in all tasks, indicating that the heterogeneous CDR model we proposed in stage two makes it easier to handle the heterogeneity of rating data produced by stage one, and thus achieves a superior performance. 

\nosection{Parameter analysis (\textbf{RQ4})}
We now study the effects of hyper-parameters on model performance. The most important parameters in \textbf{\modelname-J} are the privacy parameter $\epsilon$ and subspace dimension $n$.
The key parameter in \textbf{\modelname-S} is $sp$, which denotes the sparsity degree of the  sparse Johnson-Lindenstrauss matrix. 
We report the results in Fig.~\ref{fig1:param}-Fig.~\ref{fig1:param-sp}, where we use \textbf{(Amazon) Music $\rightarrow$ Book} and \textbf{(Douban) Movie $\rightarrow$ Book} datasets.
Fig.~\ref{fig1:param} shows the effect of $\epsilon$. %, where we vary $\epsilon$ in $\{0.5,1.0,2.0,4.0,8.0,16.0,32.0,64.0\}$. 
With user-level differential privacy, the published rating matrix changes with subspace dimension $n_1'$. For details, please refer to Appendix~\ref{sec:rating_dp_append}.
We can find that the performance gradually improves when $\epsilon$ increases and finally keeps a stable level after $\epsilon$ reaches 32.0 in \textbf{(Amazon) Music $\rightarrow$ Book}. And for the dataset \textbf{(Douban) Movie $\rightarrow$ Book}, the turning point is when $\epsilon=2.0$.
We also vary $n$ %in $\{100,200,300,400,500,600,700,800\}$ 
and report the results in Fig.~\ref{fig1:param-n}. The bell-shaped curve indicates that the accuracy will first gradually increase with $n$ and then slightly decrease, and \textbf{\modelname-J} achieves the best performance when $n=400$ in \textbf{(Amazon) Music $\rightarrow$ Book} and when $n=500$ in \textbf{(Douban) Movie $\rightarrow$ Book}.
Fig.~\ref{fig1:param-sp} shows the effect of $sp$. %, where we set $sp$ in $\{0.1,0.3,0.5,0.7,0.9\}$. 
The results show that when $sp$ is small (e.g., 0.1), SJLT fails to grasp enough knowledge from the rating matrix in source domain. 
When $sp$ is large (e.g., 0.9), SJLT gets redundant information from source domain rating matrix. 
A medium $sp$, e.g., 0.5 on \textbf{(Amazon) Music $\rightarrow$ Book} and 0.7on \textbf{(Douban) Movie $\rightarrow$ Book}, enables SJLT to grasp enough knowledge from the source domain rating matrix while exert an implicit regularization on optimization, and thus can achieve the best performance. 
%
% We observe that the performance peaks at $sp = 0.5$ in \textbf{(Amazon) Music $\rightarrow$ Book} and at $sp = 0.7 $ in \textbf{(Douban) Movie $\rightarrow$ Book} from Fig.~\ref{fig1:param-sp}. 
%\sjj{how to analyze the effect of $sp$}.

\begin{figure} 
    \centering
        \subfigure[Amazon Music $\rightarrow$ Book]{
    \begin{minipage}[t]{0.47\linewidth} %0.23为minipage的宽度，可以调节子图间的距离
    \includegraphics[width=4.2cm]{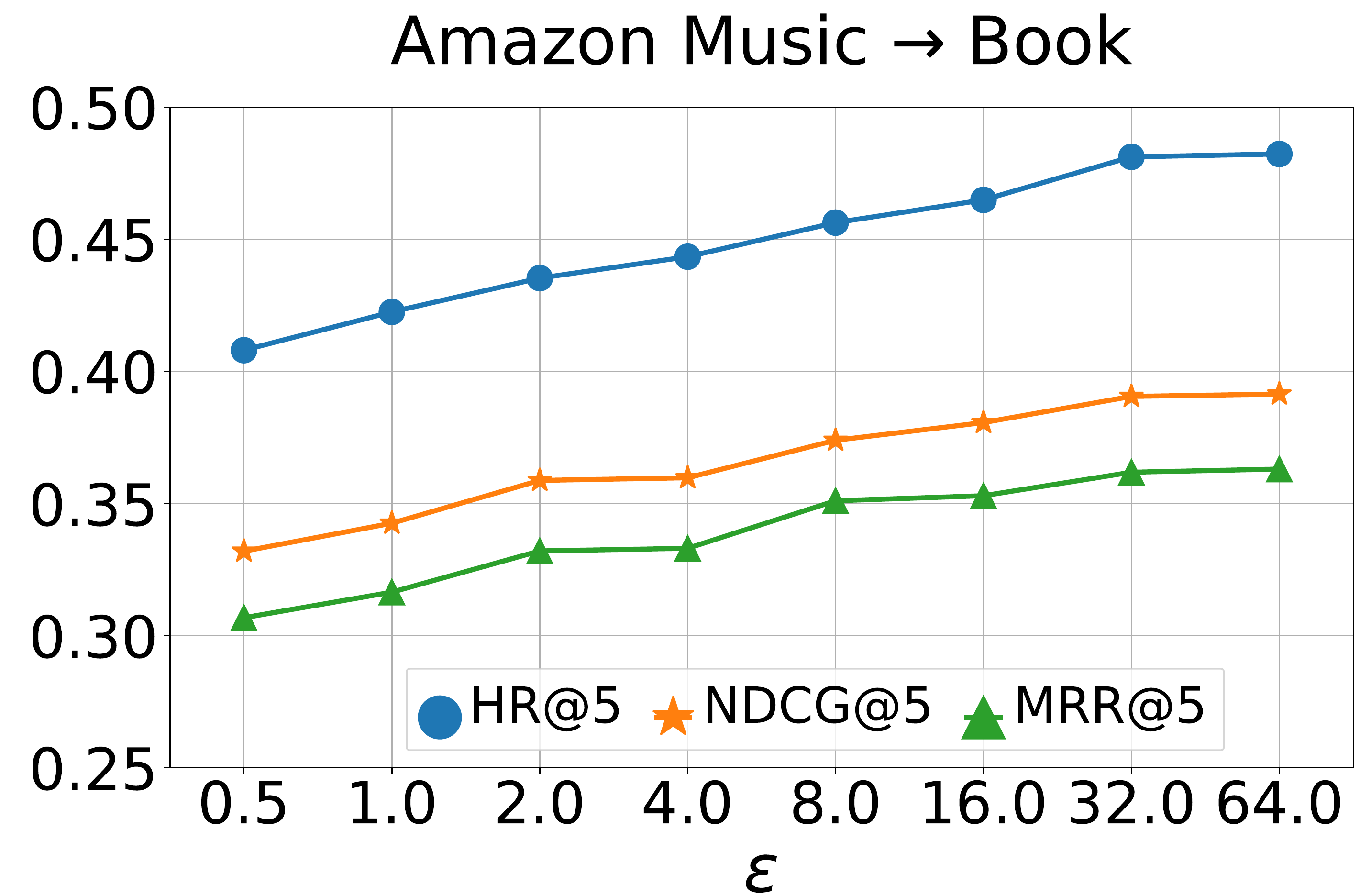}
    \end{minipage}
}
        \subfigure[Douban Movie $\rightarrow$ Book]{
    \begin{minipage}[t]{0.47\linewidth} %0.23为minipage的宽度，可以调节子图间的距离
    \includegraphics[width=4.2cm]{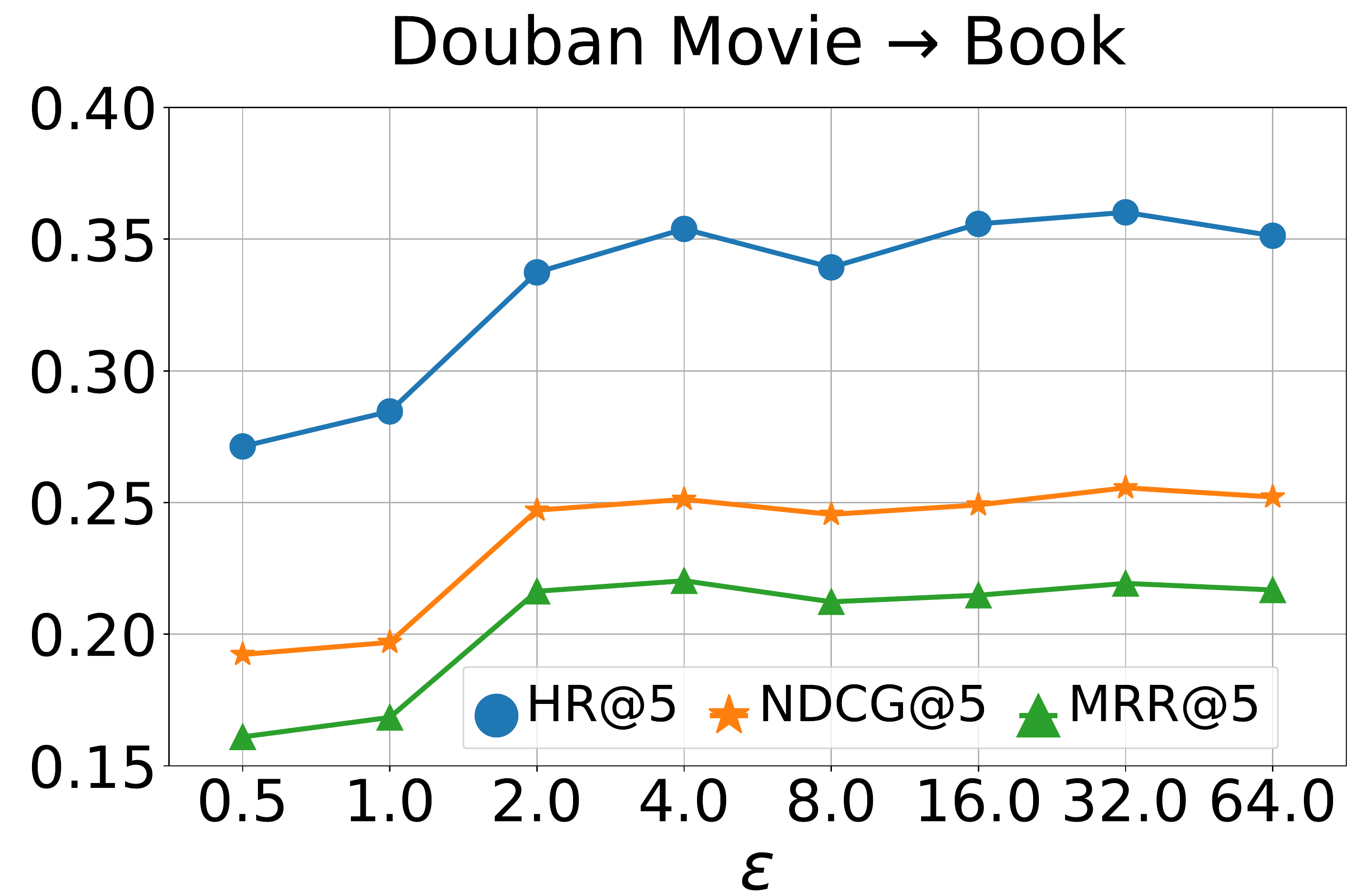}
    \end{minipage}
}
\vspace{-0.5cm}
	  \caption{Effect of the privacy parameter $\epsilon$ on \modelname-J. }
	  \vspace{-0.1cm}
	  \label{fig1:param}
% \vspace{-0.2cm}
\end{figure}

\begin{figure} 
    \centering
        \subfigure[Amazon Music $\rightarrow$ Book]{
    \begin{minipage}[t]{0.47\linewidth} %0.23为minipage的宽度，可以调节子图间的距离
    \includegraphics[width=4.2cm]{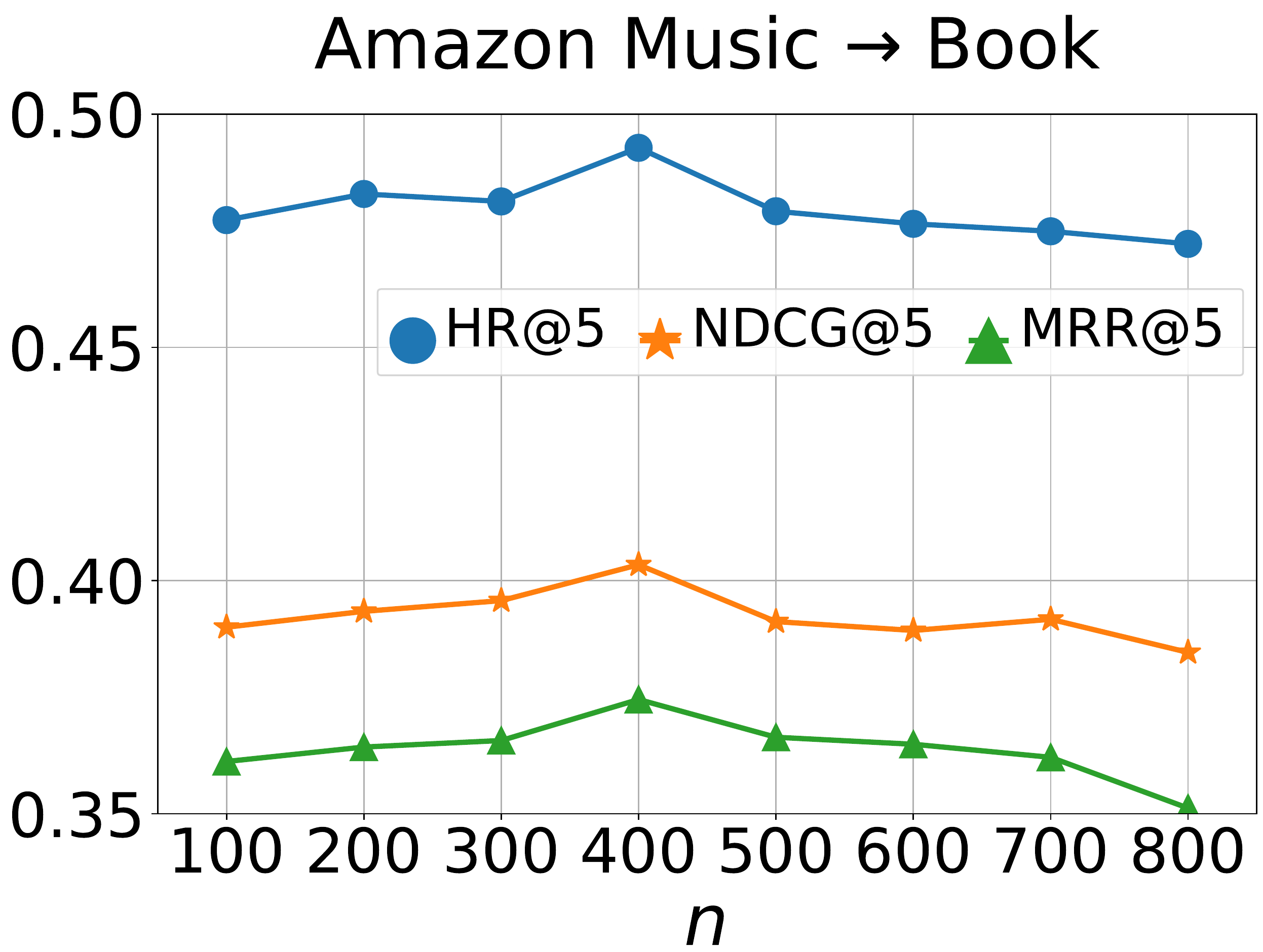}
    \end{minipage}
}
        \subfigure[Douban Movie $\rightarrow$ Book]{
    \begin{minipage}[t]{0.47\linewidth} %0.23为minipage的宽度，可以调节子图间的距离
    \includegraphics[width=4.2cm]{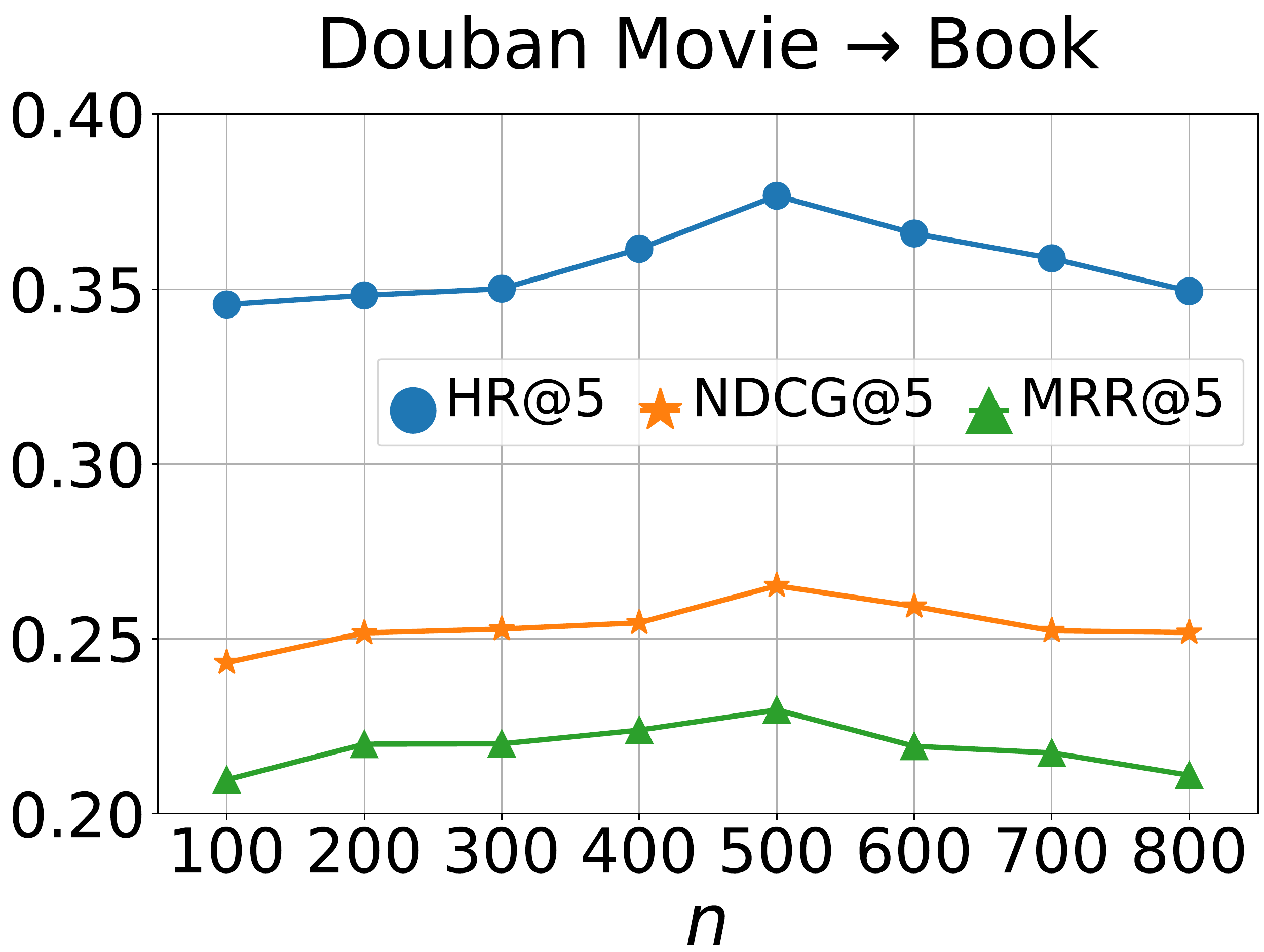}
    \end{minipage}
}
\vspace{-0.5cm}
	  \caption{Effect of subspace dimension $n$ on \modelname-J.}
% 	  \vspace{-0.2cm}
	  \label{fig1:param-n}
\vspace{-0.1cm}
\end{figure}

\begin{figure} 
    \centering
        \subfigure[Amazon Music $\rightarrow$ Book]{
    \begin{minipage}[t]{0.47\linewidth} %0.23为minipage的宽度，可以调节子图间的距离
    \includegraphics[width=4.2cm]{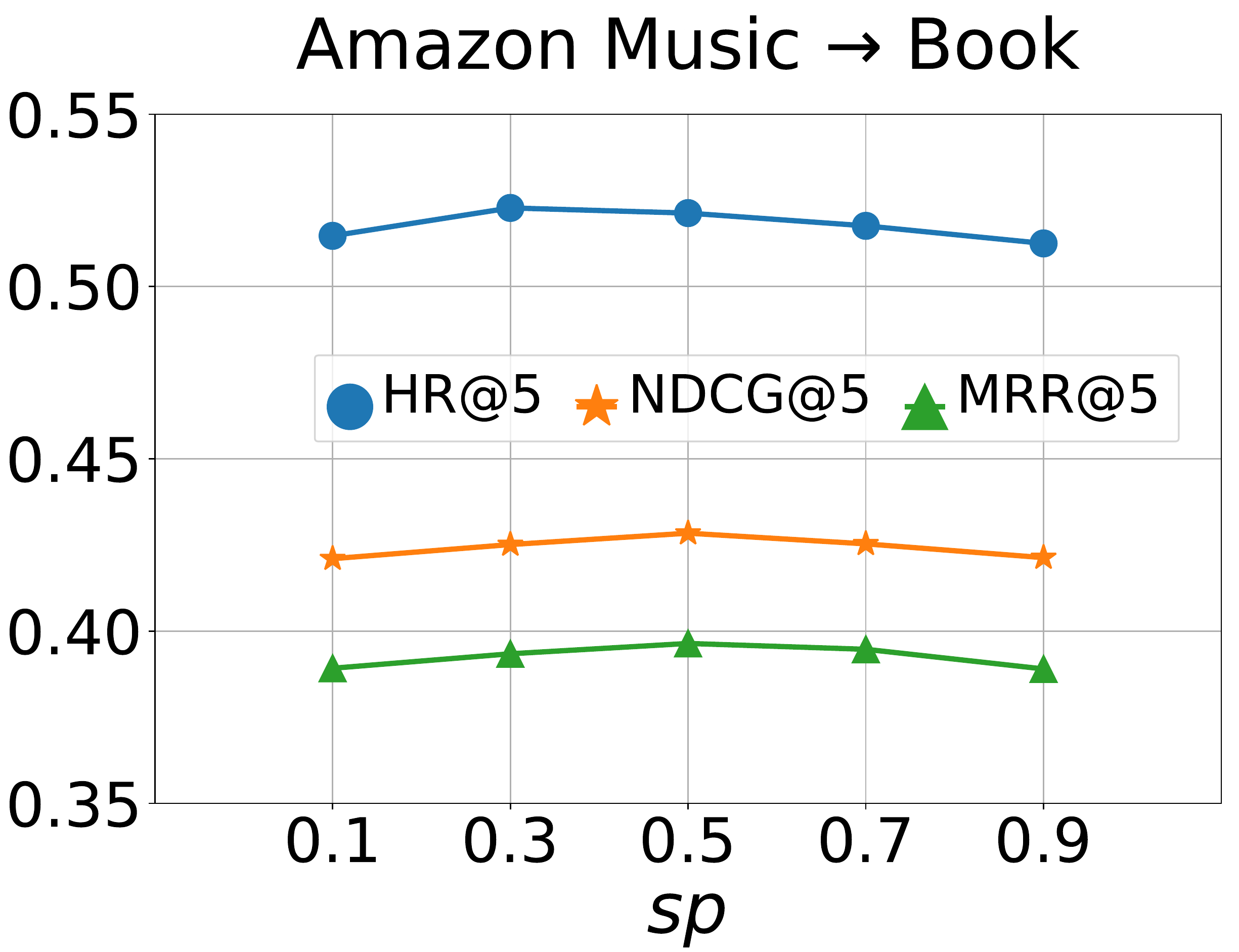}
    \end{minipage}
}
        \subfigure[Douban Movie $\rightarrow$ Book]{
    \begin{minipage}[t]{0.47\linewidth} %0.23为minipage的宽度，可以调节子图间的距离
    \includegraphics[width=4.2cm]{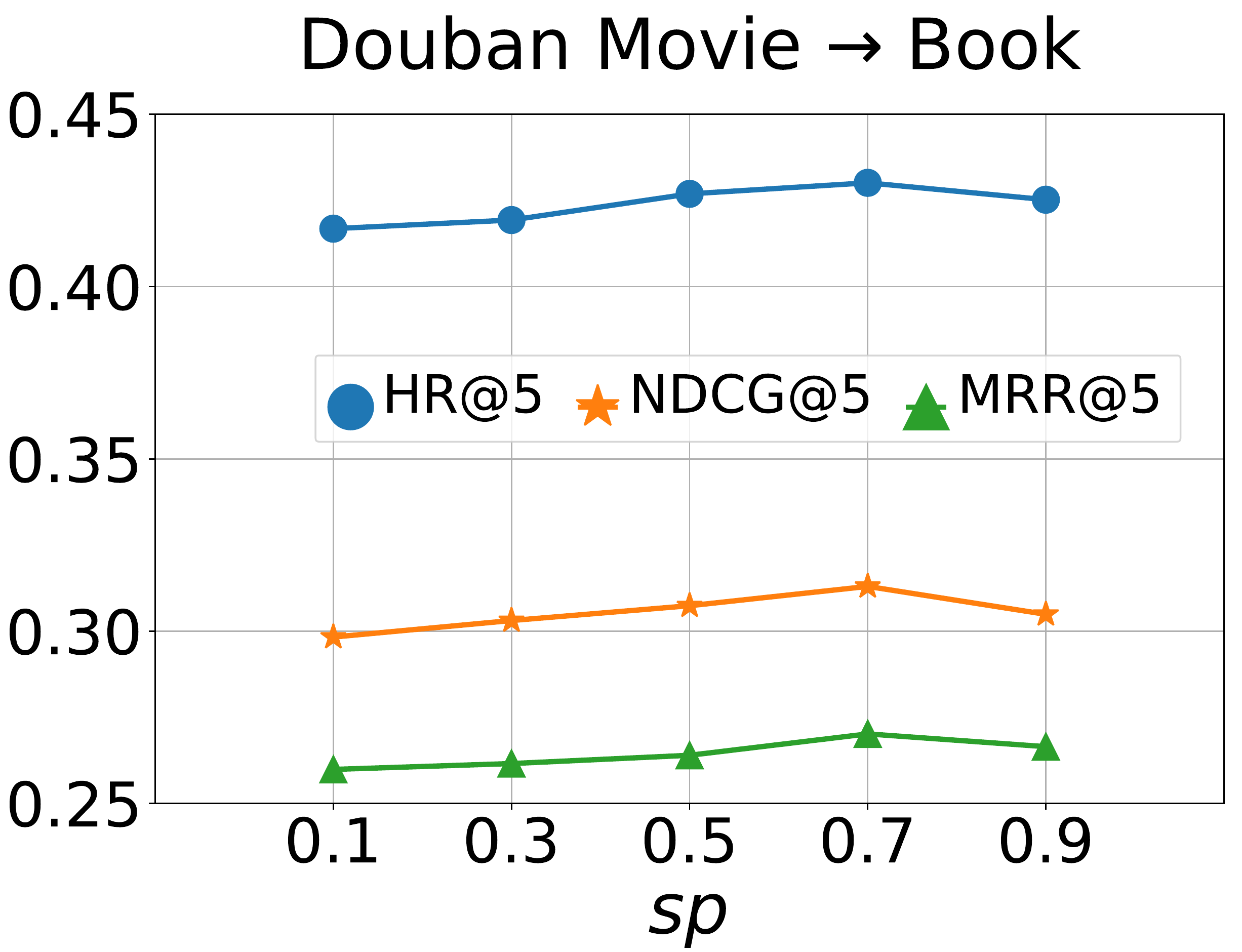}
    \end{minipage}
}
\vspace{-0.5cm}
	  \caption{Effect of sparsity param $sp$ on \modelname-S.  } 
% 	  \vspace{-0.2cm}
	  \label{fig1:param-sp}
\vspace{-0.1cm}
\end{figure}

\begin{figure} 
    \centering
        \subfigure[Amazon Movie $\rightarrow$ Music]{
    \begin{minipage}[t]{0.47\linewidth} %0.23为minipage的宽度，可以调节子图间的距离
    \includegraphics[width=4.2cm]{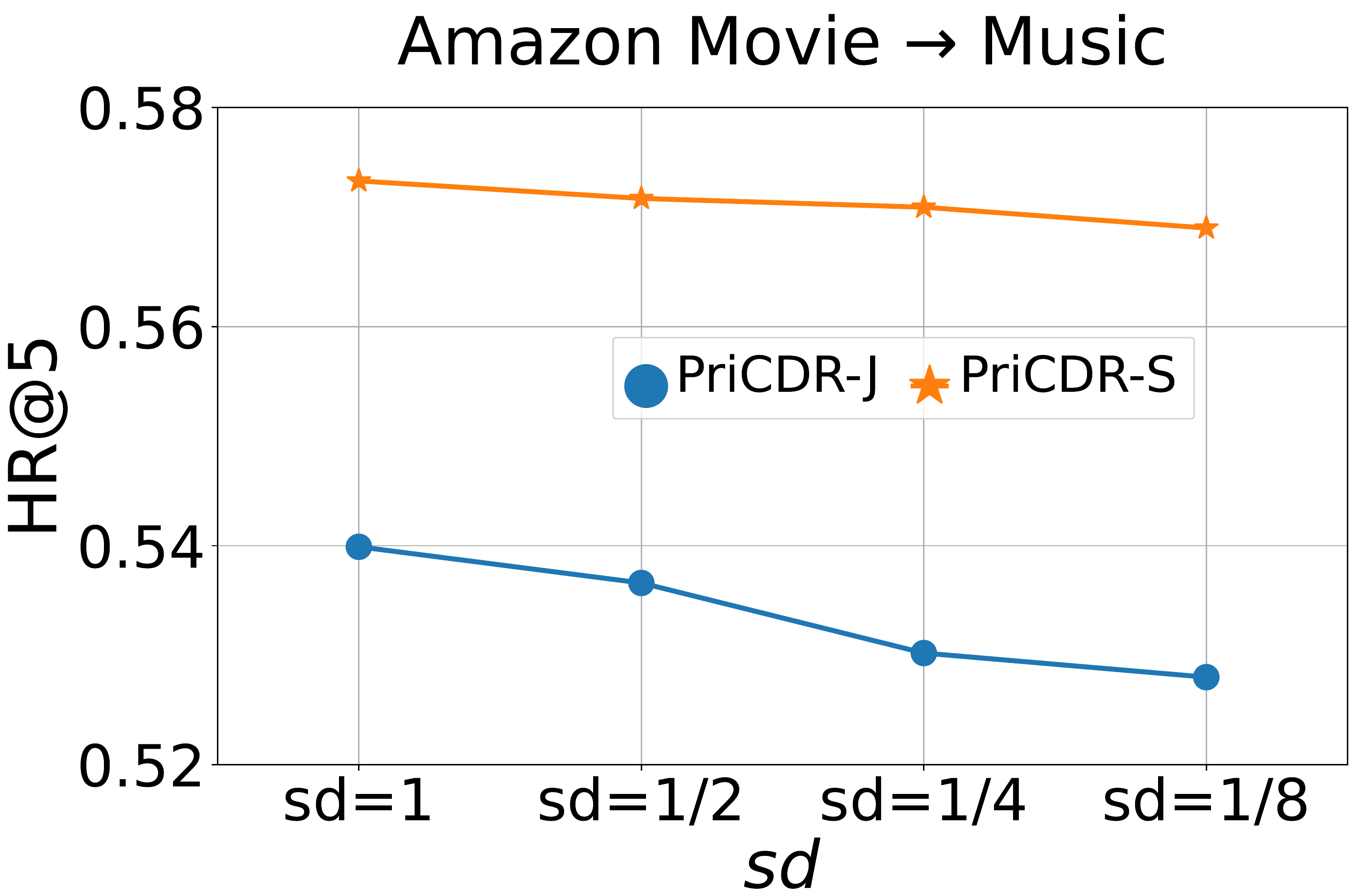}
    \end{minipage}
}
        \subfigure[Douban Movie $\rightarrow$ Book]{
    \begin{minipage}[t]{0.47\linewidth} %0.23为minipage的宽度，可以调节子图间的距离
    \includegraphics[width=4.2cm]{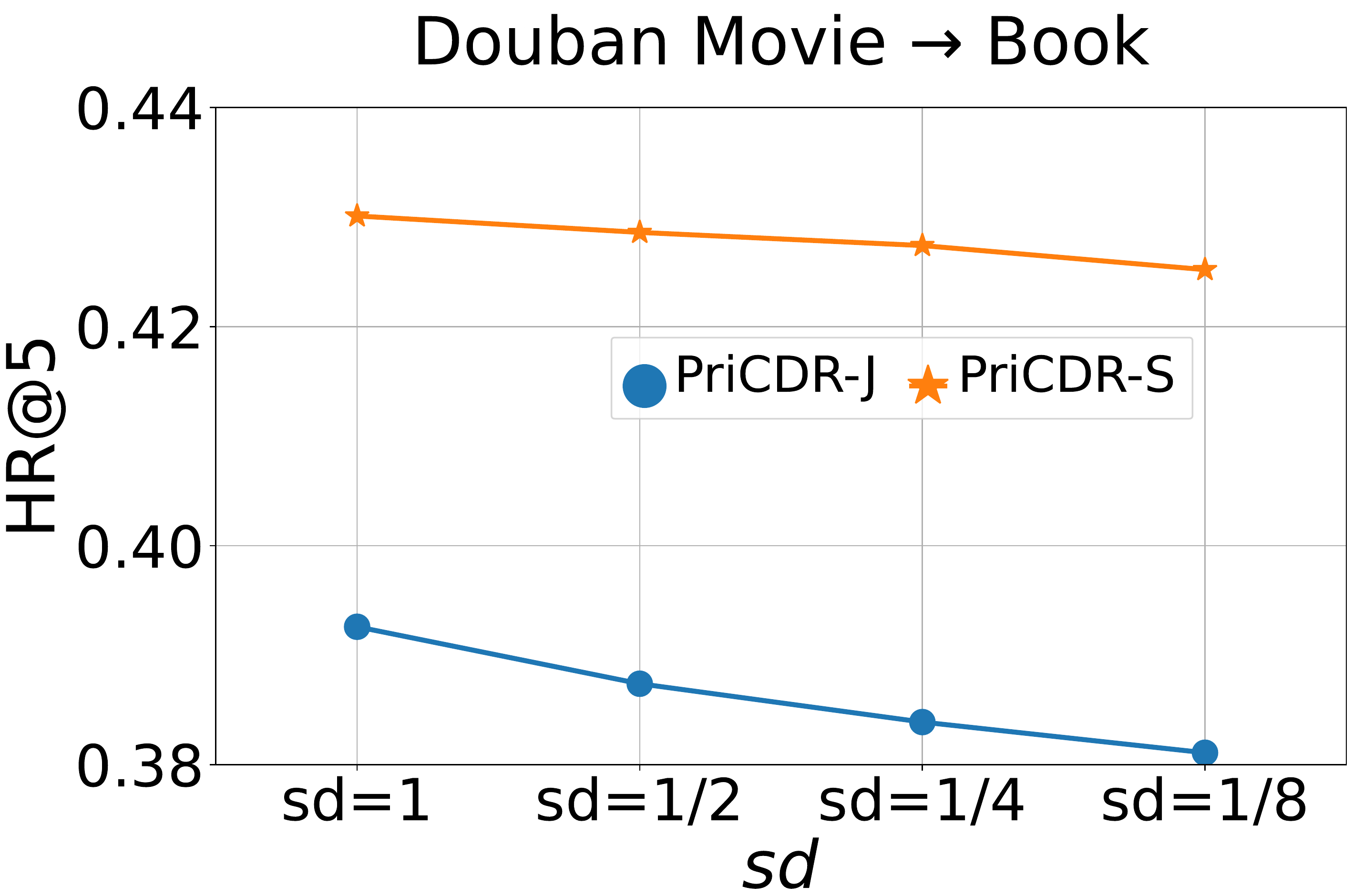}
    \end{minipage}
}
\vspace{-0.5cm}
	  \caption{Effect of the source domain sparsity degree $sd$. %on \modelname-J and \modelname-S.
	  }
	  \label{fig1:param-sd}
\vspace{-0.1cm}
\end{figure}

% \begin{table}[!t]
% \small
% \centering
% \caption{The results on xxx, where we set the cut-off to 5. }
% \label{tab:param}
% \begin{tabular}{cccccccc}
% \toprule
% \multirow{2}{*}{Model} & \multirow{2}{*}{$SP_{s}$}
% & \multicolumn{3}{c}{(Amazon) Movie$\rightarrow$Music}
% & \multicolumn{3}{c}{(Douban) Movie$\rightarrow$Book}\\
% \cmidrule(lr){3-5} \cmidrule(lr){6-8}
% & &HR &NDCG &MRR &HR &NDCG &MRR\\

% \midrule
% % DeepCoNN & 1.8775 & 1.8692 & 1.7434 & 1.8267\\ 
% % ESCOFILT & 1.4326 & 1.4089.& 1.3581 & 1.4033\\
% \multirow{4}{*}{\modelname-J} & 1 & 0.5399 & 0.4344 & 0.3992 & 0.3926 & 0.2788 & 0.2421\\
%  & 1/2 & 0.5366 & 0.4302 & 0.3969 & 0.3874 & 0.2731 & 0.2390\\
%  & 1/4 & 0.5302 & 0.4257 & 0.3978 & 0.3839 & 0.2690 & 0.2367\\
%  & 1/8 & 0.5280 & 0.4219 & 0.3963 & 0.3811 & 0.2665 & 0.2343\\
% \midrule 
% \multirow{4}{*}{\modelname-S} & 1 & 0.5733 & 0.4608 & 0.4220 & 0.4301 & 0.3090 & 0.2702\\
%  & 1/2 & 0.5717 & 0.4586 & 0.4200 & 0.4286 & 0.3063 & 0.2689\\
%  & 1/4 & 0.5709 & 0.4573 & 0.4207 & 0.4274 & 0.3055 & 0.2690\\
%  & 1/8 & 0.5690 & 0.4575 & 0.4210 & 0.4252 & 0.3037 & 0.2676\\
% \midrule

% \end{tabular}
% %}
% \end{table}

\nosection{Effect of source domain sparsity (\textbf{RQ5})}
To study the effect of source domain sparsity on the performance of JLT and SJLT, we change the sparsity of the source domain by sampling from the original dataset, 
such that the sparsity degree $sd$ could be adjusted to $\{1,1/2,1/4,1/8\}$. Here, $sd=1$ denotes the original dataset without sampling.
We report the results in Fig.~\ref{fig1:param-sd}, where we use \textbf{(Amazon) Movie $\rightarrow$ Music} and \textbf{(Douban) Movie $\rightarrow$ Book} datasets.
From the results, we can conclude that:
(1) The decrease of $sp$ brings the decrease of the performance for both JLT and SJLT.
(2) SJLT shows greater stability than JLT when the source domain becomes more sparse, which is owning to its sparse-aware ability, as we have analyzed in Sec.~\ref{sec:utility_ana}.

\section{Conclusion}
In this paper, we aim to solve the privacy issue in existing Cross Domain Recommendation (CDR) models. 
For this, we propose a novel two stage based privacy-preserving CDR framework, namely \modelname. 
In \textit{stage one}, the source domain privately publishes its user-item ratings to the target domain, and we propose two methods, i.e., Johnson-Lindenstrauss Transform (JLT) based and Sparse-aware JLT (SJLT) based, for it.  
We theoretically analyze the privacy and utility of our proposed differential privacy based rating publishing methods. 
In \textit{stage two}, the target domain builds a CDR model based on its raw data and the published data of the source domain, and we propose a novel heterogeneous CDR model (\cdrmodel) for it. 
We empirically study the effectiveness of our proposed \modelname~and \cdrmodel~on two benchmark datasets and the comprehensive experimental results show their effectiveness. 

\begin{acks}
This work was supported in part by the National Key R\&D Program of China (No.2018YFB1403001) and the National Natural Science Foundation of China (No. 62172362 and No. 72192823).
\end{acks}

%\clearpage
% \balance
\bibliographystyle{ACM-Reference-Format}
\bibliography{reference}

\newpage
\appendix
\section{Notations}\label{sec:app:note}
We summarize the main notations used in this paper in Table \ref{tab:notation}.

\begin{table}[htbp]
  \centering
  \caption{Notation Table.}
    \begin{tabular}{cc}
    \hline
    Notation & Meaning \\
    $\epsilon, \delta$ & privacy hyper-parameters in DP \\
    $\eta, \mu$ & utility hyper-parameters in DP \\
   % $w$ & constant for diagonal perturbation \\
    $\mathbf{O}$ & matrix of all elements equal one \\
    $\mathbf{I}$ & identity matrix \\ 
    $\mathbf{H}$ & normalized Hadamard matrix \\
    $\mathbf{D}$ & randomized diagonal matrix\\
    $\mathbf{M}$ & Gaussian random matrix \\
    $\mathbf{P}$ & sub-Gaussian random matrix \\
    \hline
    \end{tabular}%
  \label{tab:notation}%
\end{table}%

%---------------------------------------------------------------------------------------------------------------------------------------------------
%
%									Goal 1 Privacy preserving. 				
%
%---------------------------------------------------------------------------------------------------------------------------------------------------
\section{Privacy Analysis}
% In this section, we presents the details for privacy analysis. 

\subsection{User-level Differential Privacy}
\label{sec:append_user_dp}

\begin{proof}
To achieve privacy-preserving property, we need to verify the user-level differential privacy as JLT described in Theorem~\ref{thm:user_dp}. 
As defined in Definition~\ref{def:neigh_rating}, we suppose two neighbouring rating matrix $\mathbf{R}$ and $\mathbf{R'}$ with a rank-1 gap, i.e., $\mathbf{E} = \mathbf{R} - \mathbf{R'} = \mathbf{e_i v^{\intercal}}. $
Suppose the SVD decomposition of $\mathbf{R}$ and $\mathbf{R'}$ are $\mathbf{R} = \mathbf{U \Sigma V^{\intercal}}$ and 
$\mathbf{R'} = \mathbf{U' \Lambda V'^{\intercal}}. $
Due to the limited space, we present the proof for the scenario that both singular values of $\mathbf{R}$ and $\mathbf{R'}$ are greater than the perturbation parameter $w$. The general case is an extension of this proof combined with the results in ~\cite{blocki2012johnson}. 
To achieve the user-level differential privacy of JLT, we considere the difference between the two distributions as 
\begin{align*}
\pdf_{\mathbf{R}^{\intercal} Y} (x) & = \frac{1}{\sqrt{(2 \pi)^d \det(\mathbf{R^{\intercal} R})}} \exp \left( - \frac{1}{2} \mathbf{x} \left( \mathbf{R^{\intercal} R} \right)^{-1}  \mathbf{x} \right), \\
\pdf_{\mathbf{R'}^{\intercal} Y} (x) & = \frac{1}{\sqrt{(2 \pi)^d \det(\mathbf{R'^{\intercal} R'})}} \exp \left( - \frac{1}{2} \mathbf{x} \left( \mathbf{R'^{\intercal} R'} \right)^{-1}  \mathbf{x} \right),
\end{align*}
where $\mathbf{Y} \sim \mathcal{N} (0, \mathbf{I}_{n \times 1})$ and $\mathbf{x}$ is a random vector sampled from $\mathbf{R^{\intercal} Y}. $
Then we have two stage results. Firstly,
\begin{equation}
\label{app:svd_1}
e^{- \epsilon_0/2} \leq \sqrt{\frac{\det(\mathbf{R'^{\intercal} R)}}{\det(\mathbf{R^{\intercal} R })}} \leq e^{\epsilon_0/2}. 
\end{equation}
Secondly, 
$$
\Pr \left[ \frac{1}{2} | \mathbf{x^{\intercal}} \left(
\left( \mathbf{R^{\intercal} R} \right)^{-1} - 
\left( \mathbf{R'^{\intercal} R'} \right)^{-1} \right) \mathbf{x} |
\geq \epsilon_0 / 2 \right] \leq \delta_0.  
$$
By Lindskii's theorem on rank-1 perturbation (Theorem 9.4 in \cite{bhatia2007perturbation}), we have 
$ e^{- \epsilon_0/2} \leq \sqrt{\prod_{i} \frac{\lambda_i^2}{\sigma_i^2}} \leq e^{\epsilon_0/2}$, and thus Equation~\eqref{app:svd_1} naturally holds. 
For the second result, by the discussion in~\cite{blocki2012johnson}, we have 
$$
| \mathbf{x}^{\intercal} \left( \mathbf{R^{\intercal} R} \right)^{-1} \mathbf{x} - \mathbf{x}^{\intercal} \left( \mathbf{R'^{\intercal} R'} \right)^{-1} \mathbf{x} | \leq 
2 \left( \frac{1}{w} + \frac{1}{w^2} \right) \ln (4/\delta_0). 
$$
To achieve $(\epsilon_0, \delta_0)-$DP on user-level, we solve the following inequality exactly
$$
2 \left( \frac{1}{w} + \frac{1}{w^2} \right) \ln(4/\delta_0) \leq \epsilon_0. 
$$
We have 
$$
- \sqrt{\frac{\epsilon_0}{2 \ln (4/\delta_0)} + \frac{1}{4}} 
\leq \frac{1}{w} + \frac{1}{2} 
\leq \sqrt{\frac{\epsilon_0}{2 \ln (4/\delta_0)} + \frac{1}{4}}.
$$
Since $w > 0$, we have 
$$
w \geq \frac{1}{\sqrt{\epsilon_0}{2 \ln (4/\delta_0)} + \frac{1}{4} - \frac{1}{2}}. 
$$
Thus, by choosing 
$w = \frac{1}{\sqrt{\epsilon_0}{2 \ln (4/\delta_0)} + \frac{1}{4} - \frac{1}{2}},$
we have Algorithm~\ref{alg:dp_rating_pub} with JLT satisfies $(\epsilon_0, \delta_0)$ user-level DP. 
Then we discuss the case of Algorithm~\ref{alg:dp_rating_pub} with SJLT. 
By the construction of SJLT in Definition~\ref{def:fjlt}, we have two good properties for Algorithm~\ref{alg:dp_rating_pub}. 
\begin{itemize}
    \item $\mathbf{D}$ and $\mathbf{H}$ are unitary, i.e., 
    $
    \mathbf{D}^{\intercal} \mathbf{D} = \mathbf{I}; \quad 
    \mathbf{H}^{\intercal} \mathbf{H} = \mathbf{I}, 
    $
    \item Rows in $\mathbf{P}$ are sub-Gaussian, 
\end{itemize}

where $\mathbf{P}$ is constructed by concatenating $n'_1$ i.i.d. sparse random Gaussian vectors $\mathbf{Y}$ which is generated by
$$
\mathbf{Y}_i = 
\begin{cases}
0, \quad & \quad \text{with probability } 1-q; \\
\xi \sim \GN(0, q^{-1})& \quad \text{with probability } q. 
\end{cases}
$$

In order to check the distribution between $\mathbf{\tilde{R}}$ and $\mathbf{\tilde{R}}'$, we compare the difference in Probability Density Function (PDF) of a single row in $\mathbf{\tilde{R}}$ and $\mathbf{\tilde{R}'}$, i.e., 
$\mathbf{R}^{\intercal} \mathbf{Y}$ and ${\mathbf{R}'}^{\intercal} \mathbf{Y}$,  with $\mathbf{Y}$ denoting a sub-Gaussian vector.
Then the privacy of Algorithm~\ref{alg:dp_rating_pub} with SJLT follows the same procedure as JLT. 

\end{proof}

%---------------------------------------------------------------------------------------------------------------------------------------------------
%
%									sub2: Rating Matrix Differential Privacy 					
%
%---------------------------------------------------------------------------------------------------------------------------------------------------

\subsection{Rating Matrix Differential Privacy}
\label{sec:rating_dp_append}

By user-level differential privacy, we then compare the distribution difference on probability density function of a single row in output rating matrix $\mathbf{\tilde{R}}$ and $\mathbf{\tilde{R}}'$. 
To analyze the difference of two whole rating matrices $\mathbf{\tilde{R}}$ and $\mathbf{\tilde{R}}'$, we utilize the following composition theorem presented in Theorem~\ref{thm:k_fold_comp}. 

\nosection{Composition Theorem}
By user-level differential privacy, we then compare the distribution difference on probability density function of a single row in output rating matrix $\mathbf{\tilde{R}}$ and $\mathbf{\tilde{R}}'$. 
To analyze the difference of two whole rating matrices $\mathbf{\tilde{R}}$ and $\mathbf{\tilde{R}}'$, we utilize the following composition theorem presented in Theorem~\ref{thm:k_fold_comp}. 

\begin{theorem} [Composition Theorem \cite{dwork2010boosting}]
For every $\epsilon >0, \delta, \delta' >0$, and $k \mathbb{N},$ the class of $(\epsilon, \delta)-$differential private mechanisms is 
$(\epsilon', k \delta + \delta')-$differentially private under $k-$fold adaptive composition, for 
\begin{equation}
\label{eqn:comp_k}
\epsilon' = \sqrt{2k \ln (1/\delta')} \epsilon + k \epsilon \epsilon_0,
\end{equation}
where $\epsilon_0 = e^{\epsilon} -1.$
\label{thm:k_fold_comp}
\end{theorem}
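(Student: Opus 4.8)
The plan is to prove the advanced composition bound via the privacy-loss random variable (Definition~\ref{def:privacy_loss}) together with a martingale concentration argument, following the Dwork--Rothblum--Vadhan template. First I would reduce to the pure case $\delta = 0$: an $(\epsilon, \delta)$-DP mechanism can be coupled with an $(\epsilon, 0)$-DP mechanism except on a ``bad'' event of probability at most $\delta$. Composing $k$ such mechanisms, a union bound collects these failure events into an additive term $k\delta$, which is exactly the $k\delta$ appearing in the final slack $k\delta + \delta'$. This lets me concentrate the quantitative work on pure $\epsilon$-DP mechanisms and recover the $\delta'$ term separately.

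For the pure case I would study the total privacy loss $\mathcal{L}^{(k)} = \sum_{i=1}^k \mathcal{L}_i$, where $\mathcal{L}_i$ is the privacy loss of the $i$-th mechanism on its realized input, and where the filtration $\mathcal{F}_{i-1}$ is generated by the outputs $\theta_1, \dots, \theta_{i-1}$ seen so far (this is what makes the composition adaptive, since the $i$-th mechanism may be chosen as a function of the earlier outputs). Two per-step estimates drive everything: (i) boundedness, $|\mathcal{L}_i| \leq \epsilon$, which follows directly from $(\epsilon, 0)$-DP; and (ii) a conditional-mean bound, $\E[\mathcal{L}_i \mid \mathcal{F}_{i-1}] \leq \epsilon(e^\epsilon - 1) = \epsilon \epsilon_0$, which I would obtain from the elementary inequality that any two distributions whose log-ratio is bounded by $\epsilon$ in absolute value have KL divergence at most $\epsilon(e^\epsilon - 1)$.

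With these in hand I would center the losses and apply the Azuma--Hoeffding inequality to the martingale $\sum_{i=1}^k \left( \mathcal{L}_i - \E[\mathcal{L}_i \mid \mathcal{F}_{i-1}] \right)$, whose increments are bounded in magnitude by $\epsilon$. Taking the deviation $t = \sqrt{2k \ln(1/\delta')}\,\epsilon$ makes the tail probability exactly $\exp(-t^2/(2k\epsilon^2)) = \delta'$. Combining with the conditional-mean bound (ii) gives $\mathcal{L}^{(k)} \leq k\epsilon\epsilon_0 + \sqrt{2k\ln(1/\delta')}\,\epsilon = \epsilon'$ except with probability $\delta'$. A standard lemma then converts a privacy-loss tail bound of this shape into an $(\epsilon', \delta')$-DP guarantee for the composed mechanism, and folding back the $k\delta$ from the reduction yields $(\epsilon', k\delta + \delta')$-DP as claimed.

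I expect the main obstacle to be the adaptive-composition bookkeeping rather than any single inequality: one must verify that the per-step bounds (i) and (ii) hold uniformly over all histories and over the possibly adversarially chosen sequence of mechanisms, so that the centered sum is genuinely a martingale adapted to $\mathcal{F}_{i-1}$ and Azuma applies with the stated constant. A secondary subtlety is the $\delta > 0$ reduction: making precise the coupling between an $(\epsilon,\delta)$-DP mechanism and a pure $\epsilon$-DP mechanism, and checking that the discarded events contribute only additively across the $k$ rounds, requires the careful ``$\delta$-approximate'' decomposition of the output distributions into a dominated part plus a low-probability remainder.
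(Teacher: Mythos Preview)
Your sketch is essentially the standard Dwork--Rothblum--Vadhan argument and is correct in outline. However, note that the paper does not actually prove this theorem: it is stated with a citation to \cite{dwork2010boosting} and used purely as a black box to lift the user-level guarantee of Theorem~\ref{thm:user_dp} to the rating-matrix guarantee of Theorem~\ref{thm:dp_graph}. There is therefore no ``paper's own proof'' to compare against; your proposal simply reproduces the argument from the cited source, which is fine but goes beyond what the paper itself supplies.

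One small technical caveat in your outline: after centering, the increments $\mathcal{L}_i - \E[\mathcal{L}_i \mid \mathcal{F}_{i-1}]$ are not literally bounded by $\epsilon$ in absolute value (the lower end can reach roughly $-\epsilon - \epsilon\epsilon_0$), so invoking Azuma with range parameter $\epsilon$ is slightly informal. The original proof handles this either by bounding the moment generating function directly or by noting that the range is still $2\epsilon$, which only affects constants absorbed into the stated form of $\epsilon'$. This does not affect the correctness of your overall plan.
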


\nosection{Proof of rating matrix differential privacy}
We simply describe the proof as follows. 

\begin{proof}
By Theorem~\ref{thm:user_dp}, we have  Algorithm~\ref{alg:dp_rating_pub} preserves $(\epsilon_0, \delta_0)-$DP on user-level. 
Let $\epsilon_0 = \frac{\epsilon}{\sqrt{4 n_1' \ln (2/\delta)}}$ and $\delta_0 = \frac{\delta}{2 n_1'}$, and 
plugging them in Equation~\ref{eqn:comp_k}, we have Algorithm~\ref{alg:dp_rating_pub} preserves $(\epsilon, \delta)-$DP for rating matrix. 
\end{proof}

%---------------------------------------------------------------------------------------------------------------------------------------------------
%
%									Utility Analysis 						
%
%---------------------------------------------------------------------------------------------------------------------------------------------------

\section{Utility Analysis}
% Here we present the concentration results for JLT and SJLT in Theorem~\ref{thm:con_jlt}, which will be used in the proof of RIP (Theorem~\ref{thm:RIP}). 
% %
%  \begin{theorem}\label{thm:con_jlt}
%  [Concentration of JLT~\cite{upadhyay2013random} and SJLT~\cite{ailon2009fast}]
%  For any $\eta \in (0, 1/2)$ and $M$ be a $n_1' \times n_1$ random matrix, as defined in Definition~\ref{def:jlt} or Definition~\ref{def:fjlt}. 
%  For $\forall \mathbf{x} \in \mathbb{R}^n,$ let $S =\{ \x:  (1- \eta) \| \x \|^2 \leq \frac{1}{n_1'} \| \mathbf{M} \x \|^2 \leq (1 + \eta) \| \x \|^2 \}$, we have
%  $$
%  \Pr \left[ S \right] \geq 1- 2 \exp(- \eta^2 n_1'/8). 
%  $$
%  \label{thm:con_concen}
%  \end{theorem}

%---------------------------------------------------------------------------------------------------------------------------------------------------
%
%									sub1: Unbiased Mean 				
%
%---------------------------------------------------------------------------------------------------------------------------------------------------
\subsection{Expectation Approximation}
\label{sub:mean_append}

\begin{proof}
Let $\mathbf{R}$ be the input rating matrix. 
The SVD decomposition of $\mathbf{R}$ is 
$$
\mathbf{R} = \mathbf{U D V^{\intercal}}. 
$$
After the singular value perturbation, we have 
$$
\mathbf{R_1} = \mathbf{U \sqrt{D^2 + w^2 I} V^{\intercal}}. 
$$
Then the output rating matrix is 
$$
\mathbf{\tilde{R}} = \frac{1}{n_1'} \mathbf{M R_1},  
$$
where $\mathbf{M}$ is the random matrix generated by Definition~\ref{def:jlt} or \ref{def:fjlt}. 

Then we consider the expectation mean squared error of covariance matrices, 
\begin{align*}
\E [\mathbf{\tilde{R}^{\intercal} \tilde{R} } ] 
&= \E [ \mathbf{R_1^{\intercal} M^{\intercal} M R_1}] 
= \mathbf{R_1^{\intercal} \E[M^{\intercal} M] R_1 }  
= \mathbf{ R_1^{\intercal} R_1 } \\
& = \mathbf{V \sqrt{D^2 + w^2 I} U^{\intercal} U \sqrt{D^2 + w^2 I} V^{\intercal}} 
= \mathbf{V (D^2 + w^2 I) V^{\intercal}}. 
\end{align*}

The last equality is confirmed by the property of Gaussian orthogonal ensemble, i.e., 
$$
\E[M_{ij} M_{jk}] = \delta_{jk}. 
$$
Thus
$$
\| \mathbf{R^{\intercal} R} - \E [\mathbf{\tilde{R}^{\intercal} \tilde{R} } ] \|_2^2 \leq
w^2 m 
= 16 n_1'^2 \ln(2/\delta) \ln^2(4 n_1'/\delta) / \epsilon^2 m ,
$$ 
with $n_1' = 8 \ln(2/\mu) / \eta^2$. 

\end{proof}

Before prove RIP (Theorem~\ref{thm:RIP}), we present the proposition used in proving Johnson-Lindenstrauss Lemma~\cite{dasgupta2003elementary}. 
\begin{proposition}
\label{prop:jl}
Let $\mathbf{u} \in \mathbf{R}^{n_1}$ and $\mathbf{M} \in \mathbf{R}^{n_1 \times n_1'}$ be either JLT or SJLT. 
Let $\mathbf{v} = \frac{1}{\sqrt{n_1'}} \mathbf{M u}$, 
and $\gamma = O(\sqrt{\frac{\log m}{n_1'}}). $
Then 
$$\Pr \left( \| \mathbf{v} \|_2^2 \geq (1 + \gamma) \| \mathbf{u} \|^2_2 \right) \leq n_1'^{-2}. 
$$
\end{proposition}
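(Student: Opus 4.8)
\textbf{Proof proposal for Proposition~\ref{prop:jl}.}

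The plan is to reduce the statement to the standard one-sided Johnson--Lindenstrauss concentration bound for a single fixed vector, and then to choose the distortion parameter $\gamma$ so that the tail probability is at most $n_1'^{-2}$. First I would normalize: since the claim is invariant under scaling $\mathbf{u}$, it suffices to treat a unit vector $\mathbf{u}$, so that the target inequality becomes $\Pr\left( \| \mathbf{v} \|_2^2 \geq 1 + \gamma \right) \leq n_1'^{-2}$ with $\mathbf{v} = \frac{1}{\sqrt{n_1'}} \mathbf{M u}$. Writing $\mathbf{M}$ row-wise as $\mathbf{m}_1^{\intercal}, \dots, \mathbf{m}_{n_1'}^{\intercal}$, each coordinate of $\mathbf{M u}$ is $\langle \mathbf{m}_k, \mathbf{u}\rangle$, which for the Gaussian ensemble (JLT) is exactly $\mathcal{N}(0,1)$, and for the sub-Gaussian ensemble (SJLT, after the $\mathbf{HD}$ preconditioning which is unitary and so does not change $\|\mathbf{u}\|_2$) is a mean-zero, unit-variance sub-Gaussian random variable. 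Hence $n_1' \|\mathbf{v}\|_2^2 = \sum_{k=1}^{n_1'} \langle \mathbf{m}_k, \mathbf{u}\rangle^2$ is a sum of $n_1'$ i.i.d.\ sub-exponential random variables each with mean $1$.

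The core step is a Chernoff/moment-generating-function bound on this sum. For the Gaussian case one uses the exact MGF of a $\chi^2_1$ variable: for $0 < t < 1/2$,
\begin{equation*}
\Pr\left( \sum_{k=1}^{n_1'} \langle \mathbf{m}_k, \mathbf{u}\rangle^2 \geq (1+\gamma) n_1' \right) \leq \left( (1+\gamma) e^{-\gamma} \right)^{n_1'/2},
\end{equation*}
and then I would bound $\log\left( (1+\gamma) e^{-\gamma}\right) \leq -\gamma^2/2 + \gamma^3/3 \leq -\gamma^2/4$ for $\gamma$ small, giving a tail bound of $\exp(-n_1' \gamma^2 / 8)$ or thereabouts. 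For the sub-Gaussian case the same shape of bound, $\exp(-c\, n_1' \gamma^2)$ for a suitable absolute constant $c$ and $\gamma$ in a bounded range, follows from Bernstein's inequality for sums of independent sub-exponential variables (this is exactly the content cited from \cite{dasgupta2003elementary} and the sub-Gaussian JL literature). Setting this tail equal to $n_1'^{-2} = \exp(-2 \ln n_1')$ and solving for $\gamma$ yields $\gamma = O\!\left( \sqrt{(\log n_1')/n_1'} \right)$; since the ambient parameter regime of Algorithm~\ref{alg:dp_rating_pub} ties $n_1'$ and $m$ together (and the surrounding theorems state the bound with $\log m$), I would absorb the $\log n_1'$ into $\log m$ and write $\gamma = O\!\left( \sqrt{(\log m)/n_1'} \right)$, which is the form claimed.

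The main obstacle I anticipate is handling the SJLT case cleanly: the rows of $\mathbf{P}$ are \emph{sparse} Gaussians ($\xi \sim \mathcal{N}(0, q^{-1})$ with probability $q$, else $0$), so $\langle \mathbf{m}_k, \mathbf{u}\rangle$ is not itself Gaussian and, without the preconditioner, could have a heavy tail when $\mathbf{u}$ is spiky. The resolution is precisely Lemma~\ref{lem:rhd}: the randomized Hadamard transform $\mathbf{HD}$ flattens $\mathbf{u}$ so that $\|\mathbf{HDu}\|_\infty$ is small while $\|\mathbf{HDu}\|_2 = \|\mathbf{u}\|_2$ is preserved, which makes $\langle \mathbf{m}_k, \mathbf{HDu}\rangle$ genuinely sub-Gaussian with the right variance proxy and lets the Bernstein argument go through with absolute constants. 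So the plan is: (i) invoke unitarity of $\mathbf{HD}$ to pass to the flattened vector, (ii) verify the per-row variance is $1$ and the sub-exponential norm is $O(1)$ using Lemma~\ref{lem:rhd}, (iii) apply the Chernoff bound above, and (iv) calibrate $\gamma$ to hit the $n_1'^{-2}$ target. The only genuinely delicate bookkeeping is tracking the absolute constants in step (ii)–(iii), but since the statement only asks for $\gamma = O(\cdot)$, these constants can be left implicit.
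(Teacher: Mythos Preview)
The paper does not actually supply its own proof of Proposition~\ref{prop:jl}: it is stated in the appendix as ``the proposition used in proving Johnson--Lindenstrauss Lemma~\cite{dasgupta2003elementary}'' and then immediately invoked in the proof of Theorem~\ref{thm:RIP}. So there is nothing in the paper to compare against beyond the citation, and your sketch is precisely the standard Dasgupta--Gupta MGF/Chernoff argument that the cited reference uses (extended to the sub-Gaussian/SJLT case via Lemma~\ref{lem:rhd}, which is also how the paper intends SJLT to be handled). In that sense your approach is the ``same'' as the paper's, just spelled out.

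One small calibration point worth flagging: the Chernoff step you outline naturally yields a tail of $\exp(-c\, n_1' \gamma^2)$, and solving $\exp(-c\, n_1' \gamma^2) = n_1'^{-2}$ gives $\gamma = O\bigl(\sqrt{(\log n_1')/n_1'}\bigr)$, not $O\bigl(\sqrt{(\log m)/n_1'}\bigr)$. You noticed this and proposed to ``absorb $\log n_1'$ into $\log m$'' via the parameter regime of Algorithm~\ref{alg:dp_rating_pub}; be aware that the paper never establishes such a relationship (in Algorithm~\ref{alg:dp_rating_pub}, $n_1'$ depends only on the utility parameters $\mu,\eta$, not on $m$), so this step is really patching an imprecision in the paper's statement rather than something you can derive. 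Your argument is correct for the bound it actually produces; the appearance of $\log m$ in the proposition as written should be read as a loose statement inherited from the downstream use in Theorem~\ref{thm:RIP}.
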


%---------------------------------------------------------------------------------------------------------------------------------------------------
%
%									sub2: Concentration Results 				
%
%---------------------------------------------------------------------------------------------------------------------------------------------------

\subsection{Restricted Isometry Property}
\label{sub:RIP_append}

\begin{proof}
Let $\mathbf{R_{i}}$ be the column vector of rating matrix $\mathbf{R}$ for $i = 1, \cdots, m$ respectively. 
Each $\mathbf{R_{i}}$ represents the user-item rating list for user $i$. 
% By Proposition~\ref{prop:jl}, for a general vector $\mathbf{v}$, we have
% $$
% \Pr \left(  \| \mathbf{M v} \|_2^2 \geq (1 + \gamma) \| \mathbf{v} \|_2^2  \right) \leq n_1'^{-2}. 
% $$
%
Let 
$\mathbf{v} = \mathbf{\tilde{R}_{i}} 
 = \mathbf{U}  \sqrt{ \mathbf{D}^2 + w^2 \mathbf{I}} \mathbf{ V^{\intercal}_{i}}.$ Then 
 $$
 \| \mathbf{v} \|_2^2 =
 \mathbf{v^{\intercal}} \mathbf{v} 
 = \mathbf{ V_{i}^{\intercal}} (\mathbf{D}^2 + w^2 \mathbf{I})  \mathbf{V_{i} }. 
 $$
 By Proposition~\ref{prop:jl}, we have 
 $$
\Pr \left(  \| \mathbf{\tilde{R}_{i}} \|_2^2 \geq \left( 1 + \gamma \right) \left( \|  \mathbf{R_{i}} \|_2^2 + w^2   \right)  \right) 
\leq n_1'^{-2}. 
 $$
 Denote the event $S_i = \{ \| \mathbf{\tilde{R}_i } \|_2^2   \geq  \left( 1 + \gamma \right) \left( \|  \mathbf{R_{i}} \|_2^2 + w^2  \right)  \}$
 and $S = \{ \| \mathbf{\tilde{R}} \|_F^2 \geq \left( 1 + \epsilon \right) \left( \| \mathbf{R} \|_F^2 + w^2 m \right)  \}.  $
By the fact 
$S \subset \bigcap_{i=1}^m S_i ,$
and each row of $\mathbf{\tilde{R}}$ are pairwise independent. We have 
$$
\Pr [S] \leq \prod_{i=1}^m \Pr[S_i] \leq   n_1'^{-2m}. 
$$
To conclude, we have 
$$
\Pr [\| \mathbf{\tilde{R}} \|_F^2 \geq \left( 1 + \gamma \right) \left(  \| \mathbf{R}   \|_F^2 + w^2 m  \right)] \leq {n_1'}^{-2m}.
$$
Similarly, 
$$
\Pr [\| \mathbf{\tilde{R}} \|_F^2 \leq \left( 1 - \gamma \right) \left(  \| \mathbf{R}   \|_F^2 + w^2 m  \right) ] \leq {n_1'}^{-2m}.
$$
Thus 
\begin{align*}
& \Pr \left[  \left( 1 - \gamma \right)  \left(  \| \mathbf{R}   \|_F^2 + w^2 m  \right)   \leq \| \mathbf{\tilde{R} \|_F^2 }
\leq \left( 1 + \gamma \right)  \left(  \| \mathbf{R}   \|_F^2 + w^2 m  \right)  \right] \\
 & \leq  1 - 2 n_1'^{-2m}.
\end{align*}

 \end{proof}

%---------------------------------------------------------------------------------------------------------------------------------------------------
%
%									Improvements of SJLT 				
%
%---------------------------------------------------------------------------------------------------------------------------------------------------

\subsection{Preconditioning Effect of Randomized Hardmard Transform}
\label{sec:rhd_append}

\begin{proof}
Without loss of generality,we supoose $\| \mathbf{x}\|_2 = 1$. 
Denote $\mathbf{u} = \begin{bmatrix} u_1, \cdots u_{n_1} \end{bmatrix}^{\intercal} = \mathbf{HD} \begin{bmatrix} x_1, \cdots, x_{n_1} \end{bmatrix}^{\intercal}. $
For each $j$, 
$u_j = \sum_{i=1}^{n_1} \mathbf{HD}_i x_i$, where 
$\mathbf{HD}_i = \pm \frac{1}{\sqrt{n}_1}. $
Then we can compute the moment generating function of $u_j$, 
$$
\E [e^{t n_1 u_j}] = \prod_{i=1}^{n_1} \E [e^{t n_1 \mathbf{HD}_i x_i}] =  \prod_{i=1}^{n_1} \frac{1}{2} \left( e^{t x_i \sqrt{n_1}} + e^{-t x_i \sqrt{n_1}} \right) \leq e^{t^2 n_1}/2. 
$$
By the results of Chernoff tail bound~\cite{enwiki:1050419718}, we have
$$
\Pr \left( |u_i | \geq a \right) 
\leq 2 e^{- a^2 n_1/2} \leq \frac{1}{20 n_1 m}. 
$$
Let $a^2 = 2 \ln (40 n_1 m)/n_1$, and sum over $n_1 m$ coordinates, we have the desired result. 
\end{proof}

% \huiwen{Finished with appendix. Let me know for any question.}

\section{Datasets Description}\label{sec:app:data}
% \subsection{Datasets}

%
We conduct extensive experiments on two popularly used real-world datasets, i.e., \textit{Amazon} and \textit{Douban}, for evaluating our model on CDR tasks.
First, the \textbf{Amazon} dataset has three domains, i.e., Movies and TV (Movie), Books (Book), and CDs and Vinyl (Music) with user-item ratings.
Second, the \textbf{Douban} dataset has three domains, i.e., Book, Movie and Music. 
We show the detailed statistics of these datasets after pre-process in Table 3.

\begin{table}[htbp]
  \centering
  \caption{Dataset statistics of of Amazon and Douban.}
    \begin{tabular}{cccccc}
    \hline
    
    \multicolumn{2}{c}{ \textbf{Datasets} } & \textbf{Users} & \textbf{Items} &  \textbf{Ratings} & \textbf{Density}\\
    
    \hline
    
    \multirow{2}{*}{\textbf{Amazon}} & Music & \multirow{2}{*}{16,367} &   18,467   &  233,251   & 0.08\% \\
          & Book &       &  23,988   &   291,325   & 0.07\% \\

\hline

    \multirow{2}{*}{\textbf{Amazon}} & Movie & \multirow{2}{*}{15,914} &  19,794     &  416,228     & 0.13\% \\
         & Music  &       &  20,058     &  280,398     & 0.08\% \\
    
    \hline
    
    \multirow{2}{*}{\textbf{Amazon}} & Movie & \multirow{2}{*}{29,476} & 24,091      &  591,258     & 0.08\%  \\
         & Book  &       &   41,884    &   579,131    & 0.05\% \\
    
    \hline
    
    \multirow{2}{*}{\textbf{Douban}} & Book & \multirow{2}{*}{924} &    3,916   &    50,429   &  1.39\%\\
          & Music  &       &   4,228    &   50,157    & 1.28\% \\
    
    \hline
    
    \multirow{2}{*}{\textbf{Douban}} & Movie & \multirow{2}{*}{1,574} &   9,471    &    744,983   &  4.99\%\\
          & Book  &       &   6,139    &   85,602    & 0.89\% \\
    
    \hline
    
    \multirow{2}{*}{\textbf{Douban}} & Movie & \multirow{2}{*}{1,055} &   9,386    &  557,989     &  5.63\%\\
          & Music  &       &   4,981    &   60,626    & 1.15\% \\
\hline

    \end{tabular}%
  \label{tab:datasetss}%
\end{table}%

% Furthermore, we add a table of notations for additional explanations. 

\end{document}